\documentclass{article} 
\usepackage{iclr2027_conference,times}

\usepackage{amsmath,amsfonts,bm}

\def\eqref#1{equation~\ref{#1}}

\def\1{\bm{1}}

\DeclareMathAlphabet{\mathsfit}{\encodingdefault}{\sfdefault}{m}{sl}
\SetMathAlphabet{\mathsfit}{bold}{\encodingdefault}{\sfdefault}{bx}{n}

\usepackage{hyperref}
\usepackage{url}
\usepackage{graphicx}
\usepackage{amssymb}
\usepackage{amsthm}  
\usepackage{booktabs}
\usepackage{multirow}
\usepackage{xcolor}
\usepackage{colortbl}
\usepackage{makecell}
\usepackage{subcaption}
\usepackage{algorithm}
\usepackage{algpseudocode}
\usepackage{enumitem}

\definecolor{citeNavy}{HTML}{1A5276}   
\definecolor{linkNavy}{HTML}{154360}   
\definecolor{webCyan}{HTML}{0E6655}    

\hypersetup{
    colorlinks=true,
    breaklinks=true,
    citecolor=citeNavy,
    linkcolor=linkNavy,
    urlcolor=webCyan
}

\definecolor{lunarBlue}{HTML}{EBF3FB}    
\definecolor{lunarHeader}{HTML}{D6E8F6}  

\newtheorem{theorem}{Theorem}

\newtheorem{proposition}{Proposition}

\theoremstyle{definition}

\theoremstyle{remark}

\title{When to Intervene? State-Aware Sparse Manipulation in Federated Reinforcement Learning}

\author{
\textbf{Shutong Zheng}$^{2}$, \textbf{Sijia Chen}$^{1,*}$ \\
$^{1}$The Hong Kong University of Science and Technology (Guangzhou) \\
$^{2}$Sun Yat-sen University \\
\texttt{zhengsht29@mail2.sysu.edu.cn}, \quad \texttt{sijiachen@hkust-gz.edu.cn}
}
\iclrfinalcopy 
\begin{document}

\maketitle

\begin{abstract}
Federated reinforcement learning (FRL) enables distributed agents to collaboratively train decision-making policies, but its decentralized training process also exposes global policy learning to Byzantine manipulation. Existing poisoning attacks primarily focus on \textbf{how} to construct malicious updates, while trajectory-level intervention timing remains largely implicit. In sequential decision making, however, where an intervention is applied can alter subsequent trajectories and learning signals. Through controlled experiments, we find that changing the selected trajectory states materially alters attack efficacy even when the malicious-update construction is fixed. We therefore identify \textbf{when} as a distinct attack dimension and introduce the \underline{V}iability-constrained \underline{B}ehavioral \underline{S}teering \underline{A}ttack (\textsc{V-BSA}), which uses local policy uncertainty to select sparse intervention states and applies envelope-constrained behavioral steering. Across discrete-action benchmarks, \textsc{V-BSA} achieves substantial degradation against robust aggregators and ensemble defenses with only a fraction of the interventions used by dense poisoning, while revealing task- and aggregation-dependent boundaries. Overall, our results highlight intervention timing as a distinct dimension of sequential robustness in FRL. The code is available at \url{https://github.com/Yodeesy/V-BSA}.
\end{abstract}

\begin{figure}[h]
\centering
\includegraphics[width=\linewidth]{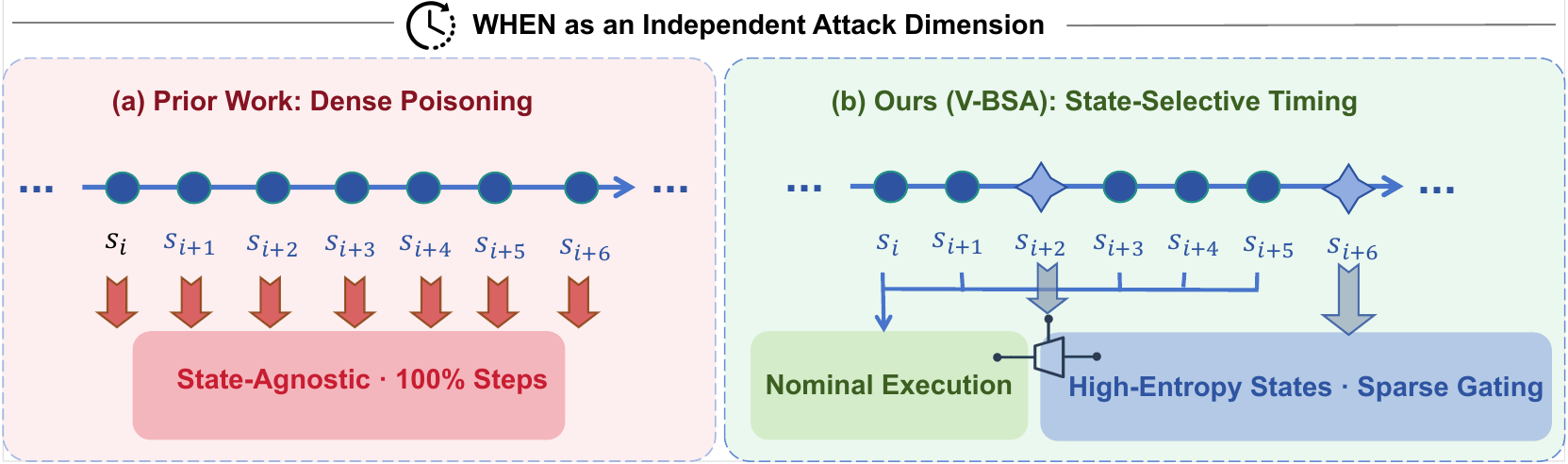}
\vspace{-1.2em}
\caption{\textbf{Conceptual comparison of intervention timing in FRL poisoning.}
\textbf{(a) Prior work} leaves trajectory-level intervention timing implicit, effectively defaulting to dense, state-agnostic intervention across rollout steps.
\textbf{(b) Ours (\textsc{V-BSA})} treats \textsc{when} as an explicit attack dimension: compromised agents maintain nominal execution on routine states and selectively intervene at states identified by policy-entropy gating.}
\label{fig:teaser}
\vspace{-0.8em}
\end{figure}

\section{Introduction}
\label{sec:intro}

Federated reinforcement learning (FRL) enables distributed agents to collaboratively optimize policies while keeping local trajectory data decentralized \citep{mcmahan2017communication, jin2022federated, khodadadian2022federated}. This decentralized process, however, exposes global policy learning to Byzantine clients that can manipulate local learning signals or submitted updates \citep{blanchard2017machine, fang2020local}. Byzantine-robust aggregation has therefore become central to secure FRL, ranging from coordinate-wise statistics such as Median and Trimmed Mean to FRL-specific filtering and recent ensemble-based architectures \citep{yin2018byzantine, fan2021fault, fang2025provably}.

Recent poisoning attacks in FRL have explored multiple attack surfaces, including local environment and reward manipulation \citep{ma2023local,ma2024reward}, malicious policy-update construction under Byzantine settings \citep{fan2021fault,fang2025provably}, and selective perturbation of important model components \citep{zhang2025cpp}. Despite these different mechanisms, attack-oriented approaches primarily focus on \textbf{how} malicious signals or updates are constructed. In sequential decision making, the consequence of a manipulation can vary with the states at which it is applied, as interventions may alter subsequent trajectories, state visitation, and future learning signals. Thus, under a limited intervention budget, the granularity at which interventions are scheduled may itself affect attack leverage.

This leaves a trajectory-level design choice largely implicit in existing FRL poisoning: which states along a local trajectory should receive intervention. Making this choice explicit is important for understanding whether attack efficacy depends only on the construction of the malicious update, or also on where that update is induced along the trajectory.

Motivated by this consideration, we conduct controlled experiments that hold the malicious-update construction (\textbf{how}) fixed while varying only the trajectory-level intervention schedule (\textbf{when}). We observe that changing the selected state subset materially alters downstream attack degradation across evaluation dynamics, even under the same update construction and intervention budget. This empirical observation motivates us to formalize \textbf{when} as a distinct attack dimension under a strictly local, defense-agnostic threat model, explicitly separating trajectory-level state selection (WHEN) from malicious-update construction (HOW).

Based on this observation, we introduce the \textbf{Viability-Constrained Behavioral Steering Attack} (\textsc{V-BSA}). The WHEN component uses locally observable policy entropy to select sparse intervention states, while the HOW component applies target-directed behavioral steering, matches its update scale to the nominal local update, and constrains the result within a client-local empirical update envelope. By explicitly separating trajectory-level state selection from malicious-update construction, \textsc{V-BSA} concentrates a limited intervention budget on selected trajectory states without relying on peer updates or server-side defense information.

We evaluate \textsc{V-BSA} across three discrete-action control environments and multiple Byzantine-robust aggregation settings. The results demonstrate substantial degradation under both conventional robust aggregators and ensemble-based defenses, while also revealing clear task- and aggregation-dependent transfer boundaries. Together with the controlled WHEN comparison, these results show that intervention timing provides a distinct perspective for understanding sequential robustness in FRL.

Our main contributions are summarized as follows:
\begin{itemize}[leftmargin=1.2em, itemsep=1pt, topsep=1pt]
\item \textbf{WHEN as a distinct attack dimension.} We explicitly separate trajectory-level intervention timing (WHEN) from malicious-update construction (HOW) in Byzantine FRL and formalize the resulting two-slot attack process.
\item \textbf{Local-only state-aware sparse manipulation.} We propose \textsc{V-BSA}, which selects intervention states from local policy statistics and performs viability-constrained behavioral steering under a strictly local, defense-agnostic threat model.
\item \textbf{Empirical characterization.} Across three discrete-action environments and multiple aggregation settings, we demonstrate the effect of state-selective intervention and characterize its task- and aggregation-dependent transfer boundaries.
\end{itemize}

\section{Related Work}
\label{sec:related_work}

\paragraph{Poisoning Attacks in FRL.}
Model poisoning in federated learning (FL) manipulates client updates to steer the global model \citep{fang2020local}, either by maximizing divergence from benign updates \citep{fang2020local, shejwalkar2021manipulating} or bounding perturbations within empirical dispersion envelopes \citep{baruch2019little}. FRL introduces sequential, Markovian sampling dynamics that distinguish local update distributions from static FL \citep{khodadadian2022federated}. Recent FRL attacks target different attack surfaces, including local-environment or reward manipulation \citep{ma2023local,ma2024reward}, malicious policy-update construction against Byzantine-robust aggregation \citep{fang2025provably}, and selective perturbation of critical model components \citep{zhang2025cpp}. These approaches primarily focus on malicious signal or update construction (\textsc{how}), leaving trajectory-level intervention timing implicit.

\vspace{-0.4em}

\paragraph{Byzantine-Robust Aggregation and Ensemble Defenses.}
Byzantine-robust learning mitigates malicious updates through filtering and robust statistics, including Krum \citep{blanchard2017machine}, Bulyan \citep{el2018hidden}, coordinate-wise Median and Trimmed Mean \citep{yin2018byzantine}, and geometric medians \citep{chen2017distributed}, supplemented by worker replication schemes such as DETOX \citep{rajput2019detox}. In FRL, FedPG-BR evaluates policy gradients via directional and magnitude alignment against a robust reference \citep{fan2021fault}, while recent work examines the empirical sufficiency of robust aggregation under heterogeneous dynamics \citep{fang2025we} and client grouping with multi-policy ensemble decisions \citep{fang2025provably}. These defenses operate primarily on submitted updates or resulting policies, without explicitly accounting for the trajectory states that generate them.

\vspace{-0.4em}

\paragraph{Temporal Selectivity and Sparse Interventions.}
Temporal selectivity has been studied at different operational scales. In FL, recent attacks exploit communication-round selection or multi-round consistency \citep{yan2024enhancing,xie2025model,coba2025,ye2026less}, while temporal defenses audit cross-round update similarity to mitigate persistent poisoning \citep{fung2018mitigating}. In single-agent RL, adaptive reward poisoning and state-aware adversarial perturbations target selected decision steps \citep{lin2017tactics,zhang2020adaptive,cui2024badrl,zhang2025state}. These lines of work, however, do not jointly address federated aggregation and intra-round trajectory-level intervention in FRL. To our knowledge, prior FRL poisoning work does not explicitly formulate trajectory-level intervention timing as a distinct attack dimension.

\vspace{-0.4em}

\section{Preliminaries and Motivation}
\label{sec:preliminaries_motivation}

\begin{figure}[t]
  \centering
  \includegraphics[width=0.98\linewidth]{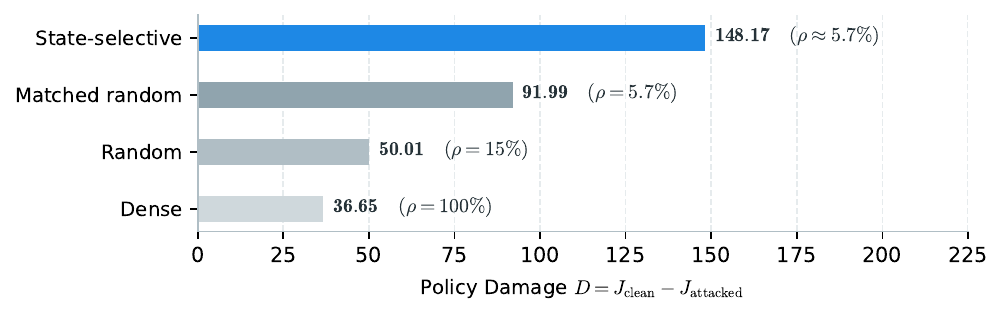}
  \vspace{-0.6em}
    \caption{\textbf{Pilot evaluation of intervention timing.} Terminal degradation \(D\) on LunarLander under FedPG-BR with fixed update construction across schedules (\(\rho\)). Entropy gating induces substantially higher degradation than dense and budget-matched baselines.}
  \label{fig:pilot_observation}
  \vspace{-0.8em}
\end{figure}

\subsection{Federated Reinforcement Learning}
\label{sec:preliminaries}

We consider a federated reinforcement learning (FRL) system with a central server and \(N\) distributed agents. Agent \(i\) interacts with a local MDP \(\mathcal{M}_i=(\mathcal{S},\mathcal{A},\mathcal{P}_i,\mathcal{R}_i,\gamma,\rho_i)\) under a shared parameterized policy \(\pi_\theta(a\mid s)\), where \(\mathcal{S}\) and \(\mathcal{A}\) denote the state and action spaces. Its expected discounted return is \(J_i(\theta)=\mathbb{E}_{\tau\sim(\pi_\theta,\mathcal{M}_i)}[\sum_{k=0}^{\infty}\gamma^k\mathcal{R}_i(s_k,a_k)]\), and the federated learning objective is to optimize the global performance \(J(\theta)=\frac{1}{N}\sum_{i=1}^{N}J_i(\theta)\).

Training proceeds over communication rounds \(t=1,\dots,T\). At round \(t\), the server broadcasts the global parameter \(\theta^t\), each agent executes local rollouts to compute an update \(\Delta\theta_i^t\), and the server updates the global policy via \(\theta^{t+1}=\theta^t+\operatorname{AGR}(\{\Delta\theta_i^t\}_{i=1}^{N})\). The aggregation rule \(\operatorname{AGR}\) may be FedAvg or a Byzantine-robust alternative.

\subsection{Motivation}
\label{sec:motivation}

To test whether finer-grained trajectory-level scheduling provides attack leverage beyond intervention frequency, we conduct a controlled pilot study on LunarLander under FedPG-BR aggregation. We hold the malicious-update construction (\textbf{how}) fixed and vary only the intervention schedule (\textbf{when}): dense (\(\rho=100\%\)), random (\(\rho=15\%\)), budget-matched random (\(\rho=5.7\%\)), and entropy-based state selection (\(\rho\approx5.7\%\)).

Figure~\ref{fig:pilot_observation} shows that intervention frequency alone does not determine attack effectiveness. Dense intervention yields \(D=36.65\), while random intervention at \(\rho=15\%\) yields \(D=50.01\). Entropy-based selection reaches \(D=148.17\) with only \(\approx 5.7\%\) of states intervened on, corresponding to \(\approx 17.5\times\) fewer interventions than the dense baseline. More importantly, under the matched \(5.7\%\) budget, entropy-based selection achieves \(D=148.17\) versus \(D=91.99\) for random selection. Thus, at a comparable intervention budget and identical update construction, changing which trajectory states are selected materially alters downstream policy degradation, providing direct evidence that trajectory-level scheduling is an independent attack dimension.

\section{Methodology}
\label{sec:methodology}

\subsection{Problem Setting}
\label{sec:problem_setting}

\paragraph{Threat Model and Objective.}
We consider Byzantine FRL with \(N\) agents, of which \(M\) are compromised, with corruption ratio \(\beta=M/N\). At round \(t\), each compromised agent receives \(\theta^t\) and observes only its local rollout \(\mathcal{D}_i^t=\{(s_{i,k}^t,a_{i,k}^t,r_{i,k}^t)\}_{k=1}^{K}\) and historical updates \(\Delta\theta_i^{1:t-1}\). The adversary operates under a \textbf{strictly local, defense-agnostic} information boundary, observing neither peer updates nor server-side aggregation mechanics. Compromised agents execute nominal local rollouts prior to attack synthesis, thereby obtaining the benign update \(g_i^t=\Delta\theta_{i,\mathrm{clean}}^t\). The objective is to maximize terminal degradation \(D\triangleq J(\theta_{\mathrm{clean}})-J(\theta_{\mathrm{attacked}})\) under bounded intervention budgets and locally constrained updates.

\paragraph{WHEN--HOW Factorization.}
FRL operates at two temporal levels: agents make sequential decisions at intra-round trajectory steps \(k\), while local updates are aggregated across communication rounds \(t\). We explicitly separate intervention timing from update construction:
\begin{equation}
\text{\textbf{WHEN:}}\ \mathcal{G}_{\mathrm{when}}(\cdot;\theta^t):\mathcal{S}\rightarrow\{0,1\}, 
\qquad
\text{\textbf{HOW:}}\ \mathcal{H}_{\mathrm{how}}:(g_i^t,\mathcal{D}_{i,\mathrm{when}}^t)\rightarrow\Delta\theta_i^{t,\mathrm{mal}}.
\label{eq:when_how_operators}
\end{equation}
The WHEN operator filters rollouts into a sparse intervention set \(\mathcal{D}_{i,\mathrm{when}}^t=\{(s,a,r)\in\mathcal{D}_i^t \mid \mathcal{G}_{\mathrm{when}}(s;\theta^t)=1\}\). Conventional dense poisoning corresponds to \(\mathcal{G}_{\mathrm{when}}\equiv 1\), leaving timing unconstrained. In contrast, \textsc{V-BSA} makes timing explicit via policy-entropy gating and instantiates \(\mathcal{H}_{\mathrm{how}}\) via viability-constrained behavioral steering (Figure~\ref{fig:framework}).

\subsection{Intervention Timing}
\label{sec:when_theory}

The WHEN operator identifies locally susceptible trajectory states without access to peer or server information.

\paragraph{Local Behavioral Susceptibility.}
For a categorical policy \(\pi_\theta(\cdot\mid s)=\operatorname{softmax}(z_\theta(s))\), parameter perturbation \(\delta\theta\) induces behavioral change measured by \(\mathbb{D}_{\mathrm{KL}}(\pi_\theta(\cdot\mid s)\Vert\pi_{\theta+\delta\theta}(\cdot\mid s))\). Following foundational trust-region and natural policy-gradient literature \citep{kakade01fim,schulman2015trust}, for locally \(C^3\) policies with bounded third derivatives,
\begin{equation}
\mathbb{D}_{\mathrm{KL}}\!\left(\pi_\theta\Vert\pi_{\theta+\delta\theta}\right)
=
\frac{1}{2}\delta\theta^\top F(s)\delta\theta
+\mathcal{O}(\|\delta\theta\|_2^3),
\qquad
F(s)=J_z(s)^\top M(p)J_z(s),
\label{eq:kl_second_order}
\end{equation}
where \(J_z(s)=\nabla_\theta z_\theta(s)\), \(p=\pi_\theta(\cdot\mid s)\), and \(M(p)=\operatorname{Diag}(p)-pp^\top\). Worst-case local susceptibility is \(\chi_\epsilon(s)=\frac{\epsilon^2}{2}\lambda_{\max}(F(s))+\mathcal{O}(\epsilon^3)\). Because the attacker cannot anticipate which parameter directions will be emphasized by an unknown aggregation rule, we adopt the isotropic second-order surrogate \(\bar{\chi}^{(2)}_\sigma(s)=\frac{\sigma^2}{2}\operatorname{Tr}(F(s))\).

\paragraph{Idealized Selection and Entropy Proxy.}
Under cardinality budget \(B\), total additive susceptibility is maximized by selecting the \(B\) states with largest \(\operatorname{Tr}(F(s))\). This oracle ranking serves strictly as an analytical benchmark; \textsc{V-BSA} operates online without buffering trajectories or evaluating Jacobians. To obtain an output-only proxy, consider the sufficient centered-isotropy condition \(J_z(s)J_z(s)^\top=cP_\perp\), where \(P_\perp=I_{|\mathcal A|}-\frac{1}{|\mathcal A|}\mathbf{1}\mathbf{1}^\top\) and \(c>0\) is state-independent. Because \(M(p)\mathbf{1}=0\), we obtain \(\operatorname{Tr}(F(s))=c(1-\|p\|_2^2)\).

\begin{theorem}[Entropy--Susceptibility Ranking under Centered Isotropy]
\label{thm:entropy_susceptibility}
For a binary policy \(p=(q,1-q)\) under centered isotropy with state-independent \(c>0\), Shannon entropy \(H(p)\) and Fisher trace \(\operatorname{Tr}(F(s))\) induce the identical state ranking: \(H(p_1) > H(p_2) \iff \operatorname{Tr}(F(s_1)) > \operatorname{Tr}(F(s_2))\).
\end{theorem}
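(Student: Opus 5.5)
The plan is to reduce both quantities to functions of a single scalar measuring how far $q$ is from $1/2$, and then check that both are strictly decreasing in that scalar. By the computation preceding the statement, centered isotropy $J_z(s)J_z(s)^\top=cP_\perp$ together with $M(p)\mathbf{1}=0$ gives
\[
\operatorname{Tr}(F(s))=c\bigl(1-\|p\|_2^2\bigr).
\]
I would re-derive this in one line via cyclicity of the trace:
\[
\operatorname{Tr}(J_z^\top M J_z)=\operatorname{Tr}(M J_zJ_z^\top)=c\operatorname{Tr}(MP_\perp)=c\operatorname{Tr}(M)=c\bigl(1-\|p\|_2^2\bigr).
\]
For $p=(q,1-q)$ this becomes $\operatorname{Tr}(F(s))=2c\,q(1-q)$.

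Write $u=q(1-q)\in[0,1/4]$. Then $q(1-q)=\tfrac14-(q-\tfrac12)^2$, so $u$ is a strictly decreasing function of $|q-\tfrac12|$. Since $c>0$ is state-independent, the Fisher trace is a strictly increasing function of $u$, with the same constant for $s_1$ and $s_2$.

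Next I would express the entropy through $u$. The map $q\mapsto H(q)=-q\log q-(1-q)\log(1-q)$ is symmetric about $1/2$, and $H'(q)=\log\frac{1-q}{q}$, so $H$ is strictly increasing on $[0,1/2]$. Restricting to $q\le 1/2$ by symmetry, $q=\tfrac{1-\sqrt{1-4u}}{2}$ is a strictly increasing bijection from $u\in[0,1/4]$ onto $q\in[0,1/2]$. Hence $H=\phi(u)$ for a strictly increasing $\phi$. Both $H(p)$ and $\operatorname{Tr}(F(s))$ are then strictly increasing functions of the same scalar $u$, which gives
\[
H(p_1)>H(p_2)\iff u_1>u_2\iff \operatorname{Tr}(F(s_1))>\operatorname{Tr}(F(s_2)).
\]

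The only delicate point is the monotonicity of $H$ as a function of $u$, since $H$ is not defined in terms of $u$ directly. The symmetry-and-inverse argument above handles this cleanly, including the endpoints $q\in\{0,1\}$ with the convention $0\log 0=0$. As an alternative I could take $d:=|q-\tfrac12|$ as the common scalar: both quantities are even in $q-\tfrac12$, and on $q\le\tfrac12$ both have derivative in $q$ that is positive except at $q=\tfrac12$.
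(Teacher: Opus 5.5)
Your proof is correct and follows essentially the paper's argument. The paper also reduces the Fisher trace to $2c\,q(1-q)$, though via the identity $M(p)=2q(1-q)P_\perp$ rather than your cyclicity-plus-$M(p)\mathbf{1}=0$ step, and then observes that $H$ and $2c\,q(1-q)$ are both symmetric about $q=\tfrac12$ with positive derivatives on $(0,\tfrac12)$, i.e.\ both strictly decreasing in $|q-\tfrac12|$ — which is exactly your stated alternative, while your reparametrization through $u=q(1-q)$ is a repackaging of the same monotonicity facts.
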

\emph{Proof sketch.} For binary actions, \(\operatorname{Tr}(F(s))=2cq(1-q)\). Both \(H(q)\) and \(2q(1-q)\) are symmetric about \(q=1/2\) and strictly monotonically decreasing in \(|q-1/2|\), inducing identical orderings over state susceptibilities (complete derivation in Appendix~\ref{app:proof_theorem_1}).

For multi-action policies, exact ranking equivalence does not generally hold, so entropy serves as a practical, backpropagation-free proxy:
\begin{equation}
\mathcal{G}_{\mathrm{when}}(s;\theta^t)
=
\mathbf{1}\!\left[
\frac{H(\pi_{\theta^t}(\cdot\mid s))}{\log|\mathcal A|}
>\tau
\right],
\qquad \tau=0.9,
\label{eq:entropy_gate}
\end{equation}
where \(\tau\) is an \emph{a priori} fixed threshold rather than tuned to an intervention budget, inducing an endogenous intervention rate. Entropy captures output dispersion via \(M(p)\) but not representation sensitivity in \(J_z(s)\), motivating our audit in Section~\ref{sec:when_audit}. Continuous-action boundaries are discussed in Appendix~\ref{app:continuous_study}.

\begin{figure*}[t]
  \centering
  \includegraphics[width=\textwidth]{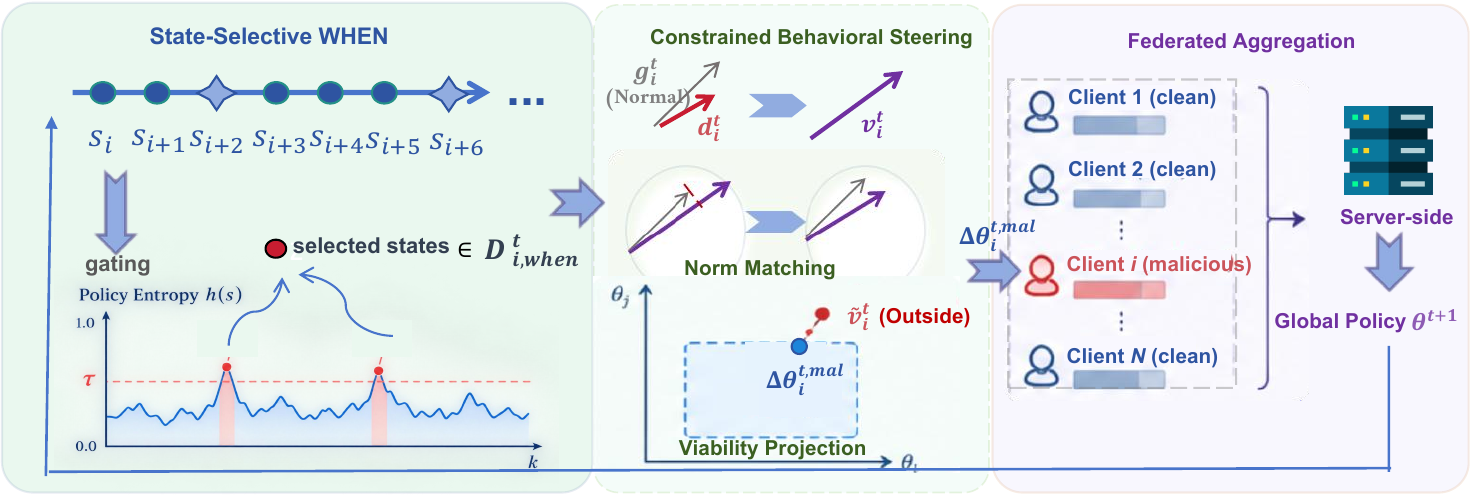}
  \vspace{-0.6em}
  \caption{\textbf{Overview of \textsc{V-BSA}.} The attack factorizes into trajectory-level \textsc{when} selection and parameter-space \textsc{how} construction: an entropy gate selects sensitive states \(\mathcal{D}_{i,\mathrm{when}}^t\), from which a targeted update is synthesized, norm-matched, and constrained by the local empirical envelope \(\mathcal{B}_i^t\) before server-side aggregation.}
  \label{fig:framework}
  \vspace{-0.8em}
\end{figure*}

\subsection{Viability-Constrained Behavioral Steering}
\label{sec:how_formulation}

\paragraph{Behavioral Steering and Scale Matching.}
Given \(\mathcal{D}_{i,\mathrm{when}}^t\), the HOW operator constructs a target-directed update via
\begin{equation}
\mathcal{L}_{\mathrm{steer}}
=
-\frac{1}{|\mathcal{D}_{i,\mathrm{when}}^t|}
\sum_{(s,\cdot,\cdot)\in\mathcal{D}_{i,\mathrm{when}}^t}
\log\pi_\theta(a_{\mathrm{targ}}^{(i)}\mid s),
\qquad
d_i^t=-\nabla_\theta\mathcal{L}_{\mathrm{steer}}(\theta^t),
\qquad
v_i^t=g_i^t+d_i^t .
\label{eq:steering_construction}
\end{equation}
followed by global \(\ell_2\) norm matching \(\tilde v_i^t = v_i^t \cdot (\|g_i^t\|_2 / \max(\|v_i^t\|_2,\epsilon))\). If \(|\mathcal{D}_{i,\mathrm{when}}^t|=0\), the client submits its nominal update \(g_i^t\). Target actions are fixed per compromised subgroup, and evaluations use unit steering weight.

\paragraph{Local Update Constraint.}
The client maintains EMA statistics \((\hat\mu_i^t,\hat\sigma_i^t)\) from historical benign updates to define the client-local empirical envelope:
\begin{equation}
\mathcal{B}_i^t
=
\prod_{j=1}^{d}
\left[
\hat\mu_{i,j}^t-\lambda_{\mathrm{box}}\hat\sigma_{i,j}^t,\,
\hat\mu_{i,j}^t+\lambda_{\mathrm{box}}\hat\sigma_{i,j}^t
\right],
\qquad
\Delta\theta_i^{t,\mathrm{mal}}
=
\operatorname{Proj}_{\mathcal B_i^t}(\tilde v_i^t),
\label{eq:viability_projection}
\end{equation}
where projection is coordinate-wise clipping. We set \(\alpha_{\mathrm{ema}}=0.1\) and \(\lambda_{\mathrm{box}}=0.85\). All quantities rely strictly on local observables; the envelope serves as a compatibility constraint rather than a formal defense-bypass guarantee.

\begin{algorithm}[t]
\caption{\textsc{V-BSA}: Local Malicious Update Synthesis at Round \(t\)}
\label{alg:vbsa}
\small
\begin{algorithmic}[1]
  \State \textbf{Input:} Policy \(\theta^t\), rollouts \(\mathcal{D}_i^t\), EMA stats \((\hat{\mu}_i^t, \hat{\sigma}_i^t)\), target action \(a_{\mathrm{targ}}^{(i)}\), threshold \(\tau\), radius \(\lambda_{\mathrm{box}}\).
  \State Compute nominal local update \(g_i^t = \Delta\theta_{i,\mathrm{clean}}^t\) on local rollout \(\mathcal{D}_i^t\).
  \State \textbf{[WHEN]} Filter sensitive states: \(\mathcal{D}_{i,\mathrm{when}}^t \leftarrow \{(s, a, r) \in \mathcal{D}_i^t \mid H(\pi_{\theta^t}(\cdot \mid s)) / \log|\mathcal{A}| > \tau\}\).
  \State \textbf{if} \(|\mathcal{D}_{i,\mathrm{when}}^t| = 0\) \textbf{then return} \(\Delta\theta_i^{t,\mathrm{mal}} \leftarrow g_i^t\) \Comment{Submit nominal update if inactive}
  \State \textbf{[HOW: Steering]} Compute signed direction \(d_i^t \leftarrow -\nabla_\theta \mathcal{L}_{\mathrm{steer}}(\theta^t)\) on \(\mathcal{D}_{i,\mathrm{when}}^t\) using \(a_{\mathrm{targ}}^{(i)}\).
  \State Combine candidate perturbation vector: \(v_i^t \leftarrow g_i^t + d_i^t\).
  \State \textbf{[HOW: Scale Matching]} Rescale vector: \(\tilde{v}_i^t \leftarrow v_i^t \cdot (\|g_i^t\|_2 / \max(\|v_i^t\|_2, \epsilon))\).
  \State \textbf{[HOW: Envelope Projection]} Project update: \(\Delta\theta_i^{t,\mathrm{mal}} \leftarrow \operatorname{Proj}_{\mathcal{B}_i^t}(\tilde{v}_i^t)\) via Eq.~\eqref{eq:viability_projection}.
  \State \textbf{Output:} Submit constrained malicious update \(\Delta\theta_i^{t,\mathrm{mal}}\) to server-side aggregation.
\end{algorithmic}
\end{algorithm}

\paragraph{Execution Characteristics.}
The entropy gate reuses policy logits already computed during rollout generation, introducing negligible forward overhead. Let \(K_s = |\mathcal{D}_{i,\mathrm{when}}^t| = \rho K\); the additional steering computation is strictly confined to the selected subset, with backward cost scaling as \(\mathcal{O}(K_s C_{\mathrm{bwd}})\), where \(C_{\mathrm{bwd}}\) denotes the per-state backward cost. Subsequent norm matching, EMA updates, and coordinate-wise projection require \(\mathcal{O}(d)\) parameter-space operations. Consequently, \textsc{V-BSA} incurs only local computational overhead proportional to the empirical intervention rate \(\rho\) and parameter dimension \(d\), without requiring Jacobian evaluation, peer-update exchange, or additional communication rounds.

\section{Experiments}

\subsection{Experimental Setup}
\label{sec:experimental_setup}

\paragraph{Environments.}
We evaluate \textsc{V-BSA} on three discrete-action control benchmarks with diverse dynamical properties: CartPole (\(|\mathcal{A}| = 2\)) \citep{barto1983neuronlike}, Acrobot (\(|\mathcal{A}| = 3\)) \citep{sutton1995generalization}, and LunarLander (\(|\mathcal{A}| = 4\)) \citep{brockman2016openai}. This selection spans both binary and multi-action settings. Detailed environment specifications are provided in Appendix~\ref{app:environments}.

\paragraph{Federated Training and Aggregation.}
We consider a system of \(N = 30\) clients, including \(M = 10\) compromised clients (\(\beta = 1/3\)). For ensemble configurations, clients are partitioned into \(K = 5\) groups of six, with two compromised clients per group. All clients parameterize policies via a two-layer MLP with hidden dimension 64 and Tanh activations. We benchmark against both standard and Byzantine-robust aggregation rules: FedAvg \citep{mcmahan2017communication}, Trimmed Mean and Coordinate-wise Median \citep{yin2018byzantine}, and FedPG-BR \citep{fan2021fault}; in ensemble settings, aggregation is performed independently within each group.

\paragraph{Attack Configuration and Evaluation.}
\textsc{V-BSA} uses a single, fixed set of attack coefficients across all tasks and aggregation rules: \(\tau = 0.9\), EMA coefficient \(\alpha_{\mathrm{ema}} = 0.1\), \(\lambda_{\mathrm{box}} = 0.85\), and unit steering weight, with no per-task or per-aggregator coefficient tuning. Target actions are assigned deterministically: compromised groups use the group index modulo \(|\mathcal{A}|\) on the multi-action benchmarks, while binary-action CartPole uses a single shared target action $0$. This avoids splitting the compromised population across the two available actions, which can induce cancellation during aggregation. Both assignment modes are fixed \emph{a priori} and are not tuned to individual experimental configurations. Unless otherwise stated, each configuration is evaluated over three independent seeds, and we report the mean return over runs. Test performance is measured over 10 evaluation rollouts, and attack effectiveness is quantified by \(D \triangleq J_{\mathrm{clean}} - J_{\mathrm{attacked}}\), where \(J_{\mathrm{clean}}\) and \(J_{\mathrm{attacked}}\) denote the evaluation returns of the unpoisoned baseline and the attacked system, respectively.
where \(J_{\mathrm{clean}}\) and \(J_{\mathrm{attacked}}\) denote the evaluation returns of the unpoisoned baseline and the attacked system, respectively. A larger positive \(D\) indicates greater performance degradation.

\begin{table}[t]
\centering
\small
\setlength{\tabcolsep}{4.2pt}        
\renewcommand{\arraystretch}{1.25}    
\setlength{\aboverulesep}{0pt}       
\setlength{\belowrulesep}{0pt}

\begin{tabular}{l >{\columncolor{lunarBlue}}c c c >{\columncolor{lunarBlue}}c c c}
\toprule
\multirow{2}{*}{\textbf{Aggregation Rule}} 
& \multicolumn{3}{c}{\textbf{Ensemble} (\(K=5, M=10\))} 
& \multicolumn{3}{c}{\textbf{Non-Ensemble} (\(N=30, M=10\))} \\
\cmidrule(lr){2-4} \cmidrule(lr){5-7}
& \multicolumn{1}{>{\columncolor{lunarHeader}}c}{\textbf{LunarLander}} & \textbf{Acrobot} & \textbf{CartPole} 
& \multicolumn{1}{>{\columncolor{lunarHeader}}c}{\textbf{LunarLander}} & \textbf{Acrobot} & \textbf{CartPole} \\
\midrule
FedAvg                 & 79.90       & 120.50     & 3.00       & \(-57.19\) & 351.17 & 1.63   \\
FedPG-BR               & 218.75      & 194.80     & 110.63     & 288.01     & 70.63  & 114.27 \\
Coordinate-wise Median & 175.83      & 151.17     & 8.80       & 434.35     & 284.87 & 42.80  \\
Trimmed Mean           & 173.96      & 323.97     & 66.40      & 193.38     & 312.97 & 5.87   \\
\bottomrule
\end{tabular}
\vspace{0.4em}
\caption{\textbf{Mean degradation \(D\) across aggregations and tasks (higher is more damaging).} Positive values indicate successful terminal policy degradation induced by \textsc{V-BSA}.}
\label{tab:main_results}
\end{table}

\begin{figure}[t]
  \centering
  \begin{subfigure}[b]{0.25\linewidth}
    \centering
    \includegraphics[width=\linewidth]{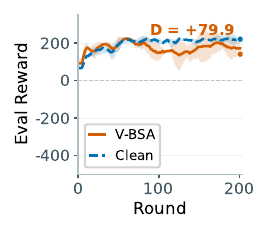}
    \caption{FedAvg}
    \label{subfig:ll_ens_fedavg}
  \end{subfigure}\hfill
  \begin{subfigure}[b]{0.25\linewidth}
    \centering
    \includegraphics[width=\linewidth]{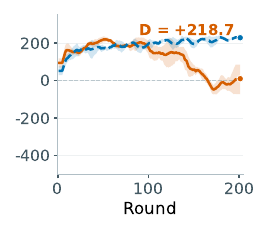}
    \caption{FedPG-BR}
    \label{subfig:ll_ens_fedpg_br}
  \end{subfigure}\hfill
  \begin{subfigure}[b]{0.25\linewidth}
    \centering
    \includegraphics[width=\linewidth]{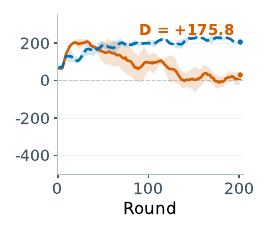}
    \caption{Median}
    \label{subfig:ll_ens_median}
  \end{subfigure}\hfill
  \begin{subfigure}[b]{0.25\linewidth}
    \centering
    \includegraphics[width=\linewidth]{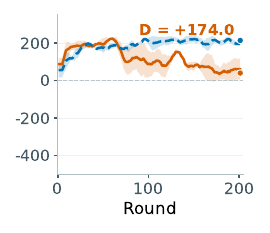}
    \caption{Trimmed Mean}
    \label{subfig:ll_ens_trimmed_mean}
  \end{subfigure}
  \vspace{-0.6em}
    \caption{\textbf{Evaluation dynamics on Ensemble-LunarLander (\(K=5, M=10\)).} Returns of \textsc{Clean} vs.\ \textsc{V-BSA} across four aggregation rules, with shaded bands denoting standard deviation across random seeds.}
  \label{fig:ll_ensemble_curves}
  \vspace{-0.8em}
\end{figure}

\begin{figure}[t]
  \centering
  \includegraphics[width=0.97\linewidth]{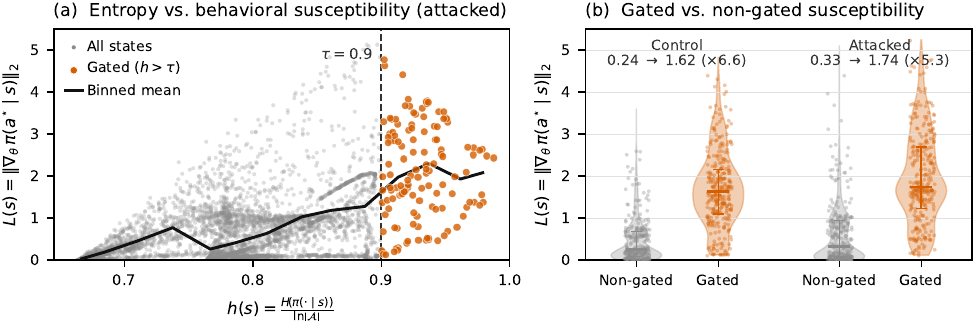}
  \vspace{-1.2em}
  \caption{\textbf{Mechanistic audit of \textsc{when} selection (LunarLander, epoch 261).} \textbf{(a)} Normalized policy entropy \(h(s)\) versus behavioral susceptibility \(L(s)\). \textbf{(b)} Susceptibility distributions for non-gated versus gated states across five client groups.}
  \label{fig:when_audit}
  \vspace{-0.8em}
\end{figure}

\begin{figure}[t]
  \centering
  \includegraphics[width=0.79\linewidth]{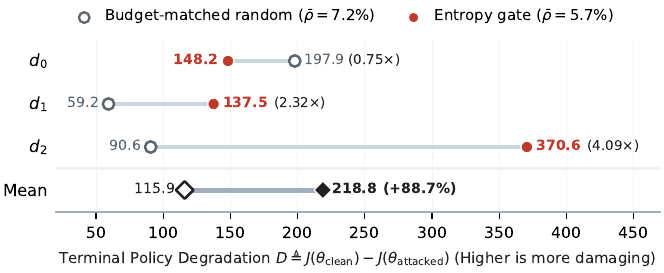}
  \vspace{-0.6em}
    \caption{\textbf{Timing ablation under matched budgets (LunarLander \(\times\) FedPG-BR).} Fixing all \textsc{how} parameters, entropy gating achieves \(1.89\times\) higher mean degradation than budget-matched random selection across evaluated dynamics.}
  \label{fig:when_controlled}
  \vspace{-0.8em}
\end{figure}

\subsection{Experimental Results}
\label{sec:exp_results}

Table~\ref{tab:main_results} reports the mean terminal degradation \(D\) across tasks and aggregation rules, where positive values indicate lower attacked return. Figure~\ref{fig:ll_ensemble_curves} further visualizes the corresponding evaluation dynamics for ensemble LunarLander.

\paragraph{Main Results and Aggregation-Dependent Transfer.}
LunarLander provides clear empirical evidence of effective state-selective manipulation. Under ensemble training, \textsc{V-BSA} yields positive terminal degradation across all four aggregation rules, ranging from \(D=79.90\) (FedAvg) to \(D=218.75\) (FedPG-BR). As shown in Figure~\ref{fig:ll_ensemble_curves}, nominal and attacked evaluation curves progressively separate across communication rounds, with degradation persisting under FedPG-BR filtering.

Cross-aggregation comparison also reveals clear operational boundaries. Under non-ensemble training, FedAvg exhibits a sign reversal (\(D=-57.19\)), where attacked runs attain higher average terminal return than the nominal baseline. We interpret this as an empirical transfer boundary, where the manipulated updates fail to sustain policy degradation. Acrobot exhibits a different pattern, with substantial degradation across aggregation rules, reaching \(D=351.17\) under non-ensemble FedAvg and \(D=323.97\) under ensemble Trimmed Mean. Together, these results illustrate that downstream degradation depends jointly on task dynamics and aggregation.

\paragraph{Binary-Action Cancellation Boundary: CartPole.}
CartPole exposes a structural boundary arising from binary target-steering geometry. Ensemble FedPG-BR yields substantial degradation (\(D=110.63\)), whereas coordinate-wise Median shows little average effect (\(D=8.80\)). Trimmed Mean exhibits an intermediate mean effect (\(D=66.40\)), concentrated on \(d_1\) (\(D=184.3\)) versus \(d_0\) (\(D=0\)) and \(d_2\) (\(D=14.9\)).

For binary actions, the opposing steering directions satisfy
\begin{equation}
  d(a_0;s) + d(a_1;s) = \bigl(1-2\pi_\theta(a_0\mid s)\bigr)\Delta(s),
  \label{eq:cartpole_cancellation}
\end{equation}
where \(\Delta(s)=\nabla_\theta(z_0-z_1)\). Thus, near the maximum-entropy boundary \(\pi_\theta(a_0\mid s)=1/2\), the two target-specific directions cancel. Under alternating target assignment, coordinate-wise order statistics can therefore suppress opposing perturbations, whereas the filtering behavior of FedPG-BR differs because it evaluates updates relative to a robust reference. This geometry explains the observed aggregation divergence; further derivations appear in Appendix~\ref{app:cartpole_aggregation}.

\paragraph{Controlled Comparison of Intervention Timing.}
To isolate the effect of intervention timing (\textbf{when}) under fixed update construction (\textbf{how}), Figure~\ref{fig:when_controlled} compares entropy gating with a budget-controlled uniform random selector on LunarLander under FedPG-BR. The two conditions use the same \textsc{how} components, random seeds, and evaluation dynamics.

Across the three dynamics, entropy gating yields a higher mean degradation (\(\bar{D}=218.75\), median \(148.17\)) than random selection (\(\bar{D}=115.92\), median \(90.64\)), corresponding to a \(1.89\times\) larger mean degradation. The effect is not uniform across dynamics: entropy gating yields \(137.48\) versus \(59.25\) on \(d_1\) and \(370.59\) versus \(90.64\) on \(d_2\), whereas random selection yields higher degradation on \(d_0\) (\(197.86\) versus \(148.17\)). The realized intervention rates are \(5.7\%\) for entropy gating and \(7.2\%\) for random selection on average. These controlled results show that, under fixed update construction and a controlled intervention budget, changing the selected trajectory states can materially alter downstream attack effectiveness, while the magnitude and direction of the effect remain dynamics-dependent.

\paragraph{Mechanistic Audit of WHEN Selection.}
\label{sec:when_audit}
Figure~\ref{fig:when_audit} examines the behavioral susceptibility of visited states in LunarLander. We quantify local susceptibility \(L(s)\) using the perturbation-based behavioral metric defined in Appendix~\ref{app:when_metric}.

As shown in Figure~\ref{fig:when_audit}a, entropy-selected states exhibit a pronounced shift toward higher susceptibility than non-selected states. Across rollout states pooled over the five client groups (Figure~\ref{fig:when_audit}b), median susceptibility increases from \(0.24\) to \(1.62\) (\(6.6\times\)) under nominal policies and from \(0.33\) to \(1.74\) (\(5.3\times\)) under attacked policies. Because a similar separation is already observed under nominal execution, the effect is not solely induced by the attack. These results support policy entropy as a practical output-only proxy for local behavioral susceptibility in the evaluated LunarLander setting, while remaining consistent with the theoretical limitation that multi-action entropy does not generally induce exact Fisher-trace rankings.

\subsection{Temporal Countermeasures and Defense Outlook}
\label{sec:temporal_defenses}

A natural question is whether temporal parameter-space auditing can suppress state-selective manipulation that remains effective against single-round robust filtering. We therefore preregistered and tested two temporal countermeasures on LunarLander: an \emph{epoch-level direction auditor} that scores each client update against a per-round coordinate-median reference, exposing the attack's directional-persistence signature, and a \emph{temporal accumulator} that aggregates rejection evidence across historical updates.

\begin{figure}[t]
    \centering
    \includegraphics[width=0.89\linewidth]{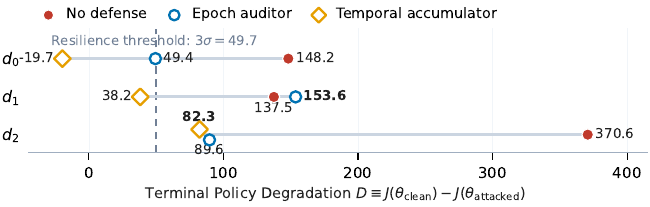}
    \caption{\textbf{Temporal countermeasures against \textsc{V-BSA} on LunarLander.} Terminal degradation \(D\) across dynamics \(d_0, d_1, d_2\) under no defense, the epoch auditor, and the temporal accumulator. The dashed line marks the preregistered resilience threshold (\(3\sigma = 49.7\)).}
    \label{fig:defense_da}
    \vspace{-0.8em}
\end{figure}

\paragraph{Defense Attempts.}
The epoch-level auditor separates Byzantine from honest clients by direction scores (\(0.586\) vs.\ \(0.215\), \(2.7\times\)), yet rejects only about one-third of Byzantine updates per round (\(31.5\%\)\text{--}\(33.0\%\)), leaving sufficient accepted updates to sustain degradation. It reduces \(D\) to \(49.44\) on \(d_0\) and \(89.61\) on \(d_2\), but reaches \(153.59\) on \(d_1\); the latter nominally exceeds the no-defense anchor (\(137.48\)), although the signed comparison is confounded by cross-run variance of the corresponding anchor. Its median degradation of \(89.61\) therefore exceeds the \(3\sigma\) criterion. The temporal accumulator further reduces median degradation to \(38.20\) (with the negative $d_0$ value reflecting cross-run anchor variance), but its maximum degradation remains \(82.26\) on \(d_2\), violating the preregistered dual condition that both median and maximum remain below \(49.7\); it also fails the clean-training gate with relative return \(0.886 < 0.9\).

\paragraph{Defense Outlook.}
These results suggest that parameter-space temporal consistency alone is insufficient when malicious updates are locally bounded yet persistently targeted. The accumulator can substantially increase detection coverage, separating Byzantine signatures by \(5.5\times\) and detecting \(97.5\%\)\text{--}\(98.5\%\) through union rejection, but this comes with \(22.7\%\)\text{--}\(31.8\%\) false-positive rejection on honest updates and still leaves residual attack influence. This detection--suppression gap motivates a more direct defense direction: complement update-space auditing with rollout-level behavioral monitoring of state-conditioned intervention patterns, repeated-state policy changes, and cross-round shifts in state occupancy. Such defenses would target the trajectory-level footprint exploited by \textsc{V-BSA}, rather than relying solely on parameter-space persistence.

\section{Conclusion}
\label{sec:conclusion}

We formalize Byzantine FRL by decoupling intervention timing (\textsc{when}) from update construction (\textsc{how}). Controlled comparisons demonstrate that varying selected trajectory states under fixed \textsc{how} substantially alters policy degradation, establishing intervention timing as an orthogonal attack dimension. Leveraging this insight, \textsc{V-BSA} uses local policy uncertainty to concentrate behavioral steering onto sparse, state-selective interventions, bypassing single-round robust filters and unweighted majority voting ensembles. While effective in discrete control, exploratory evaluations identify operational boundaries in continuous settings due to variance-head saturation (Appendix~\ref{app:continuous_study}). Crucially, our defense audits show that cumulative temporal auditing and robust intra-group estimators maintain substantial resilience, motivating future trajectory-level monitoring of sequential state visitation alongside parameter-space checks to secure federated sequential decision-making.

\bibliography{iclr2027_conference}
\bibliographystyle{iclr2027_conference}

\newpage
\appendix
\section{Theoretical Proofs and Analysis}
\label{app:theoretical_proofs}

This appendix provides the complete derivations underlying the WHEN-side analysis in Section~\ref{sec:when_theory}. We first establish the local Fisher characterization of behavioral susceptibility, then derive the idealized budgeted selection rule, the entropy--susceptibility ranking result for binary-action policies, and the failure of exact ranking equivalence in multi-action settings. We finally provide a separate analytical explanation for the special target-assignment behavior on CartPole. The latter is intended as an analysis of an observed design boundary rather than as a second main theoretical contribution.

\subsection{Local KL Expansion and Fisher Decomposition}
\label{app:proof_kl_expansion}

For a fixed state \(s\), define
\begin{equation}
  f(\delta\theta) \triangleq \mathbb{D}_{\mathrm{KL}}\left(\pi_\theta(\cdot \mid s) \,\parallel\, \pi_{\theta+\delta\theta}(\cdot \mid s)\right).
\end{equation}
Because the action space is finite and the categorical policy has strictly positive action probabilities in the interior of the softmax simplex,
\begin{equation}
  f(\delta\theta) = \sum_{a \in \mathcal{A}} \pi_\theta(a \mid s) \log \pi_\theta(a \mid s) - \sum_{a \in \mathcal{A}} \pi_\theta(a \mid s) \log \pi_{\theta+\delta\theta}(a \mid s).
\end{equation}
Clearly, \(f(\mathbf{0}) = 0\). Its gradient with respect to \(\delta\theta\) is
\begin{equation}
  \nabla_{\delta\theta} f(\delta\theta) = -\sum_{a \in \mathcal{A}} \pi_\theta(a \mid s) \nabla_\theta \log \pi_{\theta+\delta\theta}(a \mid s).
\end{equation}
At \(\delta\theta = \mathbf{0}\),
\begin{equation}
  \nabla_{\delta\theta} f(\mathbf{0}) = -\mathbb{E}_{a \sim \pi_\theta(\cdot \mid s)} \left[ \nabla_\theta \log \pi_\theta(a \mid s) \right] = \mathbf{0},
\end{equation}
by the standard score-function identity. Differentiating a second time yields the Hessian at \(\delta\theta = \mathbf{0}\):
\begin{align}
  \nabla_{\delta\theta}^2 f(\mathbf{0})
  &= -\sum_{a \in \mathcal{A}} \pi_\theta(a \mid s) \nabla_\theta^2 \log \pi_\theta(a \mid s) \nonumber \\
  &= -\sum_{a \in \mathcal{A}} \pi_\theta(a \mid s) \left( \frac{\nabla_\theta^2 \pi_\theta(a \mid s)}{\pi_\theta(a \mid s)} - \frac{\nabla_\theta \pi_\theta(a \mid s) \nabla_\theta \pi_\theta(a \mid s)^\top}{\pi_\theta(a \mid s)^2} \right) \nonumber \\
  &= -\nabla_\theta^2 \left( \sum_{a \in \mathcal{A}} \pi_\theta(a \mid s) \right) + \sum_{a \in \mathcal{A}} \pi_\theta(a \mid s) \nabla_\theta \log \pi_\theta(a \mid s) \nabla_\theta \log \pi_\theta(a \mid s)^\top \nonumber \\
  &= \mathbb{E}_{a \sim \pi_\theta(\cdot \mid s)} \left[ \nabla_\theta \log \pi_\theta(a \mid s) \nabla_\theta \log \pi_\theta(a \mid s)^\top \right] \triangleq F(s),
\end{align}
where \(F(s)\) is the state-conditional Fisher information matrix. Assuming that \(\pi_\theta(\cdot \mid s)\) is \(C^3\) in \(\theta\) with locally bounded third derivatives in a neighborhood of \(\theta\), Taylor's theorem gives
\begin{equation}
  \mathbb{D}_{\mathrm{KL}}\left(\pi_\theta(\cdot \mid s) \,\parallel\, \pi_{\theta+\delta\theta}(\cdot \mid s)\right) = \frac{1}{2} \delta\theta^\top F(s) \delta\theta + \mathcal{O}(\|\delta\theta\|_2^3).
  \label{app_eq:kl_expansion}
\end{equation}

\paragraph{Fisher decomposition for softmax policies.}
Let \(z_\theta(s) \in \mathbb{R}^{|\mathcal{A}|}\) denote the logit vector, \(p = \pi_\theta(\cdot \mid s)\), and \(J_z(s) \triangleq \nabla_\theta z_\theta(s) \in \mathbb{R}^{|\mathcal{A}| \times d}\). For a categorical softmax policy,
\begin{equation}
  \nabla_z \log \pi_\theta(a \mid s) = e_a - p,
\end{equation}
where \(e_a\) is the canonical basis vector corresponding to action \(a\). By the chain rule, \(\nabla_\theta \log \pi_\theta(a \mid s) = J_z(s)^\top (e_a - p)\). Therefore,
\begin{align}
  F(s) &= \sum_{a \in \mathcal{A}} p_a J_z(s)^\top (e_a - p)(e_a - p)^\top J_z(s) \nonumber \\
  &= J_z(s)^\top \left( \operatorname{Diag}(p) - pp^\top \right) J_z(s).
\end{align}
Defining \(M(p) \triangleq \operatorname{Diag}(p) - pp^\top\), we obtain
\begin{equation}
  F(s) = J_z(s)^\top M(p) J_z(s).
  \label{app_eq:fisher_decomposition}
\end{equation}

\subsection{Susceptibility Surrogates}
\label{app:proof_susceptibility}

From Eq.~\eqref{app_eq:kl_expansion}, the local worst-case behavioral susceptibility under an \(\ell_2\) perturbation budget \(\epsilon\) is
\begin{equation}
  \chi_\epsilon(s) \triangleq \sup_{\|\delta\theta\|_2 \le \epsilon} \mathbb{D}_{\mathrm{KL}}\left(\pi_\theta(\cdot \mid s) \,\parallel\, \pi_{\theta+\delta\theta}(\cdot \mid s)\right) = \frac{\epsilon^2}{2} \lambda_{\max}(F(s)) + \mathcal{O}(\epsilon^3).
  \label{app_eq:worst_case_susceptibility}
\end{equation}
This follows because \(F(s) \succeq 0\) and the maximum of the quadratic form \(\delta\theta^\top F(s) \delta\theta\) over the Euclidean ball is attained along an eigenvector associated with \(\lambda_{\max}(F(s))\).

Direct computation of \(\lambda_{\max}(F(s))\) or \(\operatorname{Tr}(F(s))\) requires access to parameter-space sensitivity. For a direction-agnostic second-order surrogate, let \(\delta\theta \sim \mathcal{N}(\mathbf{0}, \sigma^2 I_d)\). The quadratic term in the expected local KL divergence becomes
\begin{equation}
  \mathbb{E}_{\delta\theta} \left[ \frac{1}{2} \delta\theta^\top F(s) \delta\theta \right] = \frac{1}{2} \operatorname{Tr}\left( F(s) \mathbb{E}[\delta\theta \delta\theta^\top] \right) = \frac{\sigma^2}{2} \operatorname{Tr}(F(s)).
\end{equation}
Accordingly, we define
\begin{equation}
  \bar{\chi}^{(2)}_\sigma(s) \triangleq \frac{\sigma^2}{2} \operatorname{Tr}(F(s)).
  \label{app_eq:trace_surrogate}
\end{equation}
Under the local smoothness assumptions of Eq.~\eqref{app_eq:kl_expansion}, the expected KL divergence differs from Eq.~\eqref{app_eq:trace_surrogate} only through higher-order terms in the perturbation scale.

The two quantities in Eqs.~\eqref{app_eq:worst_case_susceptibility} and \eqref{app_eq:trace_surrogate} serve distinct roles: \(\lambda_{\max}(F(s))\) describes the single most susceptible parameter direction, whereas \(\operatorname{Tr}(F(s))\) aggregates sensitivity over an isotropic ensemble of directions. Because the attacker cannot anticipate which parameter directions will be emphasized by an unknown aggregation rule, we use the latter as a direction-agnostic local surrogate.

\subsection{Idealized Budgeted Selection}
\label{app:proof_budget_selection}

Consider a rollout \(\tau = (s_1, \dots, s_K)\) and a hypothetical intra-round intervention budget \(B \le K\). An oracle attacker that seeks to maximize the additive local susceptibility surrogate solves
\begin{equation}
  \max_{g \in \{0, 1\}^K} \sum_{k=1}^K g_k \bar{\chi}^{(2)}_\sigma(s_k) \quad \text{s.t.} \quad \sum_{k=1}^K g_k \le B.
  \label{app_eq:budget_oracle}
\end{equation}

\begin{proposition}[Budgeted Selection Optimality]
\label{prop:budget_selection}
For the oracle problem in Eq.~\eqref{app_eq:budget_oracle}, an optimal selector consists of the \(B\) states having the largest values of \(\operatorname{Tr}(F(s_k))\).
\end{proposition}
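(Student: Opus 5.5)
The plan is to reduce the oracle problem in Eq.~\eqref{app_eq:budget_oracle} to a linear program over a uniform-matroid constraint and solve it by a greedy exchange argument. Write the weights as \(w_k \triangleq \bar{\chi}^{(2)}_\sigma(s_k) = \frac{\sigma^2}{2}\operatorname{Tr}(F(s_k))\). Two preliminary facts are needed. First, \(\sigma^2/2>0\) is a state-independent constant, so ordering states by \(w_k\) is the same as ordering them by \(\operatorname{Tr}(F(s_k))\). Second, every \(w_k \ge 0\), because \(F(s) = J_z(s)^\top M(p) J_z(s) \succeq 0\) by Eq.~\eqref{app_eq:fisher_decomposition}. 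Indeed, \(M(p)\) is the covariance of \(e_a\) under \(a\sim p\), hence positive semidefinite, so its trace is nonnegative.

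Because the weights are nonnegative, adding a state never decreases the objective. So whenever \(B \le K\), some optimum has exactly \(B\) active entries: any feasible \(g\) with fewer ones can be padded to cardinality \(B\) without loss. This lets me restrict attention to subsets \(S\subseteq\{1,\dots,K\}\) with \(|S|=B\) and objective \(\sum_{k\in S} w_k\). The problem is now to show that the top-\(B\) set is optimal.

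Fix an ordering \(w_{(1)}\ge\cdots\ge w_{(K)}\), breaking ties arbitrarily, and let \(S^\star\) be its first \(B\) indices. I will use an exchange argument. Take any optimal \(S\) with \(|S|=B\) and \(S\neq S^\star\). Then there exist \(i\in S^\star\setminus S\) and \(j\in S\setminus S^\star\), and by construction \(w_i\ge w_j\). Swapping gives \(S' = S\setminus\{j\}\cup\{i\}\). This set is still feasible, its objective is no smaller, and it agrees with \(S^\star\) on one more element. Iterating at most \(B\) times reaches \(S^\star\), so \(S^\star\) is optimal. A cleaner alternative is a direct bound. For any \(S\) of size \(B\), the \(r\)-th largest element of \(S\) is at most \(w_{(r)}\), so summing over \(r\) gives \(\sum_{k\in S} w_k\le\sum_{r=1}^B w_{(r)}\). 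I would probably present this version, since it avoids induction.

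There is no real obstacle here. The only points needing care are (i) nonnegativity of the weights, which justifies treating the inequality constraint \(\sum_k g_k\le B\) as an equality, and (ii) ties. With ties, ``the'' \(B\) largest is not unique, so the statement should read ``an optimal selector'', with any tie-breaking giving the same optimal value. I would also note that the \(\mathcal{O}(\sigma^3)\) remainder is excluded, since the oracle problem is posed on the surrogate \(\bar{\chi}^{(2)}_\sigma\) itself.
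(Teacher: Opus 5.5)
Your proof is correct and uses the same exchange argument as the paper; the order-statistic bound you would present is just a non-iterative form of that swap. It is also slightly more complete than the paper's proof, which never justifies treating the budget $\sum_k g_k \le B$ as tight: that step needs your observation that $\bar{\chi}^{(2)}_\sigma(s_k)\ge 0$ because $F(s)\succeq 0$, and you also handle ties explicitly.
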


\begin{proof}
Because \(\sigma^2/2\) is common to all states, maximizing \(\bar{\chi}^{(2)}_\sigma(s_k)\) is equivalent to maximizing \(\operatorname{Tr}(F(s_k))\). Suppose a feasible solution selects state \(s_j\) but does not select \(s_i\), where \(\operatorname{Tr}(F(s_i)) > \operatorname{Tr}(F(s_j))\). Replacing \(s_j\) with \(s_i\) preserves the cardinality constraint and strictly increases the objective. Hence, no optimal solution can contain a lower-ranked state while excluding a higher-ranked one. Selecting the top-\(B\) states is therefore optimal.
\end{proof}

This result is an oracle characterization of the desired local ranking under a fixed budget. It does not describe the online procedure used by \textsc{V-BSA}, nor does it imply optimal degradation of long-horizon expected return. The latter depends on state-visitation shifts, environmental transition dynamics, future policy updates, and the interaction between successive interventions.

\subsection{Entropy--Susceptibility Ranking under Centered Isotropy}
\label{app:proof_theorem_1}

To obtain an online state-selection rule without explicitly computing the logit Jacobian, consider the centered logit subspace. Define
\begin{equation}
  P_\perp \triangleq I_{|\mathcal{A}|} - \frac{1}{|\mathcal{A}|} \mathbf{1}\mathbf{1}^\top.
\end{equation}
Suppose
\begin{equation}
  J_z(s) J_z(s)^\top = c P_\perp,
  \label{app_eq:centered_isotropy}
\end{equation}
where \(c > 0\) is a state-independent constant. Since \(M(p)\mathbf{1} = \mathbf{0}\), we have
\begin{equation}
  \operatorname{Tr}(F(s)) = \operatorname{Tr}\left( M(p) J_z(s) J_z(s)^\top \right) = c \operatorname{Tr}\left( M(p) P_\perp \right) = c\left(1 - \|p\|_2^2\right).
  \label{app_eq:fisher_trace_centered}
\end{equation}

\begin{theorem}[Restatement of Theorem~\ref{thm:entropy_susceptibility}]
\label{app_thm:entropy_susceptibility}
For a binary-action policy \(p = (q, 1-q)\) with \(q \in (0, 1)\), if the centered isotropy condition in Eq.~\eqref{app_eq:centered_isotropy} holds with state-independent \(c > 0\), then Shannon entropy \(H(p) = -q \log q - (1-q)\log(1-q)\) and the Fisher-trace surrogate \(\operatorname{Tr}(F(s))\) induce identical state rankings:
\begin{equation}
  H(p_1) > H(p_2) \iff \operatorname{Tr}(F(s_1)) > \operatorname{Tr}(F(s_2)).
\end{equation}
\end{theorem}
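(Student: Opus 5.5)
The plan is to reduce both quantities to functions of the single scalar \(u = |q - 1/2| \in [0, 1/2)\). We then show that each is a strictly decreasing function of \(u\); ranking equivalence follows at once.

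\textbf{Step 1: the Fisher trace in closed form.} Start from Eq.~\eqref{app_eq:fisher_trace_centered}, which already gives \(\operatorname{Tr}(F(s)) = c(1-\|p\|_2^2)\) under centered isotropy. For \(p=(q,1-q)\),
\[
1 - q^2 - (1-q)^2 = 2q(1-q) = \tfrac12 - 2u^2 .
\]
Hence \(\operatorname{Tr}(F(s)) = c\,(\tfrac12 - 2u^2)\). Since \(c>0\) is state-independent, this is strictly decreasing in \(u\) on \([0,1/2)\).

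\textbf{Step 2: the entropy.} Write \(h(q) = -q\log q - (1-q)\log(1-q)\) on \((0,1)\).
\begin{itemize}
\item Symmetry: \(h(q)=h(1-q)\), so \(h\) depends on \(q\) only through \(u\).
\item Monotonicity: \(h'(q) = \log\frac{1-q}{q}\), which is strictly positive for \(q<1/2\) and strictly negative for \(q>1/2\). So \(h\) is strictly increasing on \((0,1/2]\) and strictly decreasing on \([1/2,1)\).
\end{itemize}
Define \(\phi(u) = h(1/2 - u)\) for \(u\in[0,1/2)\). Then \(\phi\) is strictly decreasing, and \(H(p)=\phi(|q-1/2|)\).

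\textbf{Step 3: combine.} Let \(u_i = |q_i - 1/2|\) for \(i=1,2\). Strict monotonicity gives
\[
H(p_1) > H(p_2) \iff u_1 < u_2 \iff \tfrac12 - 2u_1^2 > \tfrac12 - 2u_2^2 \iff \operatorname{Tr}(F(s_1)) > \operatorname{Tr}(F(s_2)).
\]
The last equivalence uses \(c>0\). This is exactly the claimed ranking identity.

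The main point requiring care is Step 2's reduction to \(u\). We must confirm that \(h\) is injective as a function of \(u\), not merely unimodal. This follows from symmetry plus strict monotonicity on each half, which together give the well-defined strictly decreasing \(\phi\). The other thing to check is that \(c\) really is the same constant at \(s_1\) and \(s_2\); the state-independence hypothesis supplies this. Everything else is routine calculus.
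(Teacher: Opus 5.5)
Your proposal is correct and follows essentially the same route as the paper: you reduce the Fisher trace to \(2c\,q(1-q)\), and you show that both it and \(H\) are strictly decreasing functions of \(|q-1/2|\). The only difference is in how the trace is obtained. You specialize the general identity \(c(1-\|p\|_2^2)\), whereas the paper computes \(M(p)=2q(1-q)P_\perp\) directly; your explicit parametrization by \(u=|q-1/2|\) just spells out the paper's final chain of equivalences more carefully.
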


\begin{proof}
For \(|\mathcal{A}| = 2\),
\begin{equation}
  P_\perp = I_2 - \frac{1}{2}\mathbf{1}\mathbf{1}^\top = \begin{bmatrix} 1/2 & -1/2 \\ -1/2 & 1/2 \end{bmatrix}.
\end{equation}
Moreover,
\begin{equation}
  M(p) = \begin{bmatrix} q(1-q) & -q(1-q) \\ -q(1-q) & q(1-q) \end{bmatrix} = 2q(1-q) P_\perp.
\end{equation}
Using Eq.~\eqref{app_eq:centered_isotropy},
\begin{equation}
  \operatorname{Tr}(F(s)) = \operatorname{Tr}\left( M(p) J_z(s) J_z(s)^\top \right) = 2c\,q(1-q) \operatorname{Tr}(P_\perp^2) = 2c\,q(1-q),
\end{equation}
because \(P_\perp^2 = P_\perp\) and \(\operatorname{Tr}(P_\perp) = 1\).
Now define \(\psi(q) \triangleq 2c\,q(1-q)\) and \(\phi(q) \triangleq H(q)\). Both functions are symmetric around \(q = 1/2\). On \(q \in (0, 1/2)\),
\begin{align}
  \psi'(q) &= 2c(1 - 2q) > 0, \\
  \phi'(q) &= \log\left(\frac{1-q}{q}\right) > 0.
\end{align}
Hence both functions are strictly increasing on \((0, 1/2]\) and strictly decreasing on \([1/2, 1)\). Equivalently, both are strictly decreasing functions of \(|q - 1/2|\). Therefore,
\begin{align}
  H(p_1) > H(p_2)
  &\iff \left|q_1 - \frac{1}{2}\right| < \left|q_2 - \frac{1}{2}\right| \nonumber \\
  &\iff 2c\,q_1(1-q_1) > 2c\,q_2(1-q_2) \nonumber \\
  &\iff \operatorname{Tr}(F(s_1)) > \operatorname{Tr}(F(s_2)).
\end{align}
\end{proof}

\paragraph{From ranking to online gating.}
Theorem~\ref{thm:entropy_susceptibility} bridges the idealized ranking formulation to an online state-selection rule. Under its binary-action conditions, entropy and the Fisher-trace surrogate induce the identical ordering. Consequently, an entropy level set
\begin{equation}
  \mathcal{S}_h \triangleq \left\{ s \;\Big|\; H(\pi_\theta(\cdot \mid s)) > h \right\}
\end{equation}
corresponds to a top-ranked subset under the susceptibility surrogate, with an induced intervention budget \(B(h) = |\mathcal{S}_h|\). Thus, thresholding entropy realizes the susceptibility ranking without explicitly evaluating \(\operatorname{Tr}(F(s))\), buffering the rollout for sorting, or performing Jacobian backpropagation.

For multi-action policies, exact ranking equivalence does not generally hold. \textsc{V-BSA} therefore uses normalized entropy as a practical output-only proxy:
\begin{equation}
  \mathcal{G}_{\mathrm{when}}(s; \theta^t) = \mathbf{1}\left[ \frac{H(\pi_{\theta^t}(\cdot \mid s))}{\log |\mathcal{A}|} > \tau \right],
  \label{app_eq:entropy_gate}
\end{equation}
where \(\tau = 0.9\) is fixed \emph{a priori} and is neither swept nor calibrated to match a prescribed intervention budget. The resulting intervention frequency is therefore induced endogenously by the evolving policy and trajectory distribution.

\subsection{Analytical Failure of Exact Entropy Ranking in Multi-Action Regimes}
\label{app:proof_multiaction_counterexample}

For \(|\mathcal{A}| \ge 3\), Eq.~\eqref{app_eq:fisher_trace_centered} becomes
\begin{equation}
  \operatorname{Tr}(F(s)) = c\left(1 - \|p\|_2^2\right).
\end{equation}
The quantity \(1 - \|p\|_2^2\) is the Gini impurity (equivalently Tsallis-2 entropy up to normalization), which does not in general induce the same ordering as Shannon entropy.
Consider a four-action policy and two states with
\begin{align}
  p_A &= (0.55, \; 0.20, \; 0.20, \; 0.05), \\
  p_B &= (0.40, \; 0.40, \; 0.19, \; 0.01).
\end{align}
For \(p_A\),
\begin{equation}
  \|p_A\|_2^2 = 0.55^2 + 0.20^2 + 0.20^2 + 0.05^2 = 0.3850 \implies 1 - \|p_A\|_2^2 = 0.6150.
\end{equation}
For \(p_B\),
\begin{equation}
  \|p_B\|_2^2 = 0.40^2 + 0.40^2 + 0.19^2 + 0.01^2 = 0.3562 \implies 1 - \|p_B\|_2^2 = 0.6438.
\end{equation}
Hence, under the Fisher trace surrogate, \(\operatorname{Tr}(F(s_B)) > \operatorname{Tr}(F(s_A))\).
In contrast, the Shannon entropies evaluate to:
\begin{align}
  H(p_A) &= -0.55\ln 0.55 - 2(0.20\ln 0.20) - 0.05\ln 0.05 \approx 1.1224, \\
  H(p_B) &= -2(0.40\ln 0.40) - 0.19\ln 0.19 - 0.01\ln 0.01 \approx 1.0946.
\end{align}
Therefore,
\begin{equation}
  H(p_A) > H(p_B) \quad \text{while} \quad \operatorname{Tr}(F(s_A)) < \operatorname{Tr}(F(s_B)).
\end{equation}
This explicit ranking reversal shows that, for multi-action policies, Shannon entropy cannot in general be interpreted as an exact estimator of Fisher-trace susceptibility. The entropy gate in \textsc{V-BSA} should consequently be understood as a lightweight practical proxy rather than an exact susceptibility oracle.

\subsection{Aggregation Analysis on CartPole: Binary Steering Geometry and Target Cancellation}
\label{app:cartpole_aggregation}

This section analyzes the CartPole-specific behavior of \textsc{V-BSA} through the geometry of binary target steering. It is intended as a mechanistic explanation of the observed aggregation-dependent phenomena, rather than as a second main theoretical contribution. The formal theoretical analysis of the paper remains the WHEN-side susceptibility characterization in Section~\ref{sec:when_theory} and Appendix~\ref{app:theoretical_proofs}.

\paragraph{Setup and target assignment.}
For the signed update convention of Eq.~\eqref{eq:steering_construction}, let
\begin{equation}
  d(a; s) \triangleq -\nabla_\theta \mathcal{L}_{\mathrm{steer}}(s, a) = \nabla_\theta \log \pi_\theta(a \mid s),
\end{equation}
so that the malicious update before the local viability constraint is
\begin{equation}
  v = g + d.
\end{equation}
For CartPole, the main configuration uses the same deterministic modulo target rule as the multi-action environments:
\begin{equation}
  a^{(g)} = g \bmod 2.
\end{equation}
With five compromised groups, this yields the target pattern \(\{0, 1, 0, 1, 0\}\), i.e., \(n_0 = 3\) groups target action $0$ and \(n_1 = 2\) groups target action $1$. Each group contains \(\nu = 2\) compromised clients. The timing gate is
\begin{equation}
  \mathcal{S}_\tau \triangleq \left\{ s \;\Big|\; \frac{H(\pi_\theta(\cdot \mid s))}{\ln 2} > \tau \right\}, \qquad \tau = 0.9.
\end{equation}
The resulting malicious updates are globally norm-matched to the corresponding nominal local updates and then coordinate-wise clamped to a client-local empirical update envelope with \(\lambda_{\mathrm{box}} = 0.85\). The five group policies remain separate throughout training and are combined only at decision time by majority voting.

\paragraph{Binary target geometry.}
Let
\begin{equation}
  \Delta(s) \triangleq \nabla_\theta z_0(s) - \nabla_\theta z_1(s),
\end{equation}
where \(z_0(s)\) and \(z_1(s)\) are the two action logits. For \(\pi_\theta(\cdot \mid s) = (\pi, 1-\pi)\), the score identities for a binary softmax policy give
\begin{equation}
\begin{aligned}
  d(0; s) &= (1 - \pi)\Delta(s), \\
  d(1; s) &= -\pi\Delta(s),
\end{aligned}
\end{equation}
and hence
\begin{equation}
  d(0; s) + d(1; s) = (1 - 2\pi)\Delta(s).
  \label{app_eq:binary_geometry}
\end{equation}
Thus, for every fixed state, the two target-specific steering directions are exactly collinear and oppositely directed. They cancel exactly at the maximum-entropy point \(\pi = 1/2\).

The entropy gate also restricts the policy probability to a neighborhood of \(1/2\). Let \(q_\tau \in (0, 1/2)\) satisfy \(H(q_\tau) = 0.9 \ln 2\). Numerically, \(q_\tau \approx 0.316\). Therefore,
\begin{equation}
  \frac{H(\pi)}{\ln 2} > 0.9 \implies \pi \in (q_\tau, 1 - q_\tau),
\end{equation}
and consequently
\begin{equation}
  |1 - 2\pi| < 1 - 2q_\tau \approx 0.368.
\end{equation}
Within the gate set,
\begin{equation}
  \|d(0; s) + d(1; s)\|_2 < 0.368\,\|\Delta(s)\|_2,
\end{equation}
whereas each individual target direction satisfies
\begin{equation}
  \|d(a; s)\|_2 \ge 0.316\,\|\Delta(s)\|_2, \qquad a \in \{0, 1\}.
\end{equation}
Hence, the entropy gate selects precisely the region in which the two opposing target directions have comparable magnitude, making cancellation a structural possibility for an alternating target assignment.

\paragraph{Cancellation under alternating assignment.}
To isolate the geometric effect, consider the following matched-state abstraction: all Byzantine workers are assumed to act on the same state \(s\), with the same policy probability \(\pi\), the same \(\Delta(s)\), and equal steering strength. Under this abstraction, the aggregate Byzantine steering before subsequent aggregation-specific processing is
\begin{equation}
\begin{aligned}
  A(\pi) &= \nu \left[ n_0 d(0; s) + n_1 d(1; s) \right] \\
  &= \nu \left[ n_0 - (n_0 + n_1)\pi \right] \Delta(s).
  \label{app_eq:binary_aggregate}
\end{aligned}
\end{equation}
At the maximum-entropy center \(\pi = 1/2\):
\begin{equation}
  A\left(\frac{1}{2}\right) = \frac{\nu}{2}(n_0 - n_1)\Delta(s).
\end{equation}
For the realized assignment \(n_0 = 3\), \(n_1 = 2\), and \(\nu = 2\):
\begin{equation}
  A\left(\frac{1}{2}\right) = \Delta(s).
\end{equation}
For comparison, if all \(G = 5\) compromised groups were aligned to action $0$, the corresponding center-point reference would be
\begin{equation}
  10\, d(0; s) = 5\Delta(s).
\end{equation}
Thus, the alternating assignment leaves only one fifth of the fully aligned reference magnitude at \(\pi = 1/2\). More generally, when the number of target-0 and target-1 groups is equal, the matched-state aggregate vanishes exactly at the maximum-entropy point.

Equation~\eqref{app_eq:binary_aggregate} is a pointwise geometric characterization rather than an identity for the realized federated trajectory. In the actual training process, compromised clients encounter different states and accumulate updates over their own trajectories, so the exact matched-state assumptions need not hold globally. The equation therefore identifies a cancellation tendency induced by the binary target geometry; its quantitative effect depends on the downstream aggregation mechanism.

\paragraph{Relation to the empirical CartPole behavior.}
This geometric interpretation is consistent with the observed dependence on the aggregation surface. Under the original modulo assignment, the in-group coordinate-wise trimmed-mean and median blocks do not meet the pre-registered damage criterion on the tested CartPole configurations. By contrast, the FedPG-BR block, whose final decision mechanism operates through the separately maintained group policies and majority voting rather than a direct in-group order statistic, exhibits seed-dependent damage; on the three dynamics seeds, the corresponding values are \(D = \{164.9, 0, 0\}\). This difference indicates that the same local steering geometry can produce different task-level outcomes depending on how the resulting local updates are processed downstream.

A pre-registered CartPole variant further tested the effect of removing the alternating assignment by assigning all five compromised groups to action $0$. This fixed-target variant passed its pre-registered damage criterion on \(2/3\) dynamics, yielding \(D = \{139.5, 0, 192.4\}\), but it did not amplify the largest observed single-seed damage relative to the original modulo configuration, since \(139.5 < 164.9\). Accordingly, the fixed-target result is interpreted as evidence that target alignment changes the success pattern in the tested setting, not as a uniformly stronger attack. Both target-assignment rules were fixed \emph{a priori}; neither result is used to retune the attack.

\paragraph{Local update envelope.}
The local constraint has a narrower interpretation than a statistical indistinguishability or defense-acceptance guarantee. Because the malicious update is norm-matched and then coordinate-wise clamped inside the client-local empirical envelope, each coordinate remains within the prescribed empirical range formed from that client's own update history. This construction bounds the update relative to the attacker's local historical scale, but does not imply that the resulting update is indistinguishable from honest updates or that any particular aggregation rule must accept it.

\paragraph{Scope of the analysis.}
The conclusions above are intentionally limited to the tested setting. First, the cancellation expression in Eq.~\eqref{app_eq:binary_aggregate} relies on a matched-state abstraction and does not constitute an aggregation-independent impossibility result. Second, the observed effect is aggregation-dependent: the analysis directly describes the binary steering geometry and its interaction with the tested in-group order-statistic aggregators, while FedPG-BR and the single-pool setting involve additional downstream mechanisms. Third, the analysis does not imply a converse rule that action spaces with \(|\mathcal{A}| \ge 3\) must admit successful penetration. The experiments instead reveal a task- and aggregation-dependent boundary, with the empirical penetration pattern expanding from the binary-action setting to the three- and four-action settings under the evaluated configurations.


\section{Behavioral Susceptibility Metric}
\label{app:when_metric}

In Section~\ref{sec:exp_results}, we conduct a mechanistic audit of the \textsc{when} selection mechanism by evaluating the behavioral susceptibility of visited trajectory states. This section formalizes the perturbation-based behavioral susceptibility metric \(L(s)\), describes its empirical Monte-Carlo estimator, and details the policy-entropy surrogate evaluated during online execution.

\paragraph{Definition of Behavioral Susceptibility.}
Let \(\pi_\theta(\cdot \mid s)\) denote the evaluated policy at state \(s \in \mathcal{S}\), where the action space \(\mathcal{A}\) is discrete. We measure local behavioral sensitivity by the total variation distance between the nominal action distribution and the distribution induced by isotropic state perturbations:
\begin{equation}
L(s) \triangleq \frac{1}{2} \, \mathbb{E}_{\varepsilon \sim \mathcal{N}(0,\,\sigma^2 I_{d_s})} \left[ \big\|\pi_\theta(\cdot \mid s + \varepsilon) - \pi_\theta(\cdot \mid s)\big\|_1 \right] \in [0, 1],
\label{eq:susceptibility}
\end{equation}
where \(d_s\) denotes the state dimension and \(\sigma\) is scaled relative to the empirical dispersion of the state variables. Larger \(L(s)\) indicates that small input perturbations induce larger shifts in the action distribution, corresponding to higher local behavioral susceptibility, whereas smaller values characterize locally stable decision regimes.

\paragraph{Empirical Monte-Carlo Estimator.}
During offline auditing, \(\hat{L}(s)\) is evaluated without model updates or backpropagation via \(M = 50\) independent Monte-Carlo noise samples \(\{\varepsilon_m\}_{m=1}^M \sim \mathcal{N}(0, \sigma^2 I_{d_s})\):
\begin{equation}
\hat{L}(s) = \frac{1}{2M} \sum_{m=1}^{M} \big\|\pi_\theta(\cdot \mid s + \varepsilon_m) - \pi_\theta(\cdot \mid s)\big\|_1.
\label{eq:susceptibility_est}
\end{equation}

\paragraph{Entropy as a Low-Overhead Computational Proxy.}
While the Monte-Carlo estimator \(\hat{L}(s)\) is suitable for post-hoc auditing, evaluating multiple perturbed states at every rollout step would introduce additional forward-pass latency. Therefore, \textsc{V-BSA} uses the normalized policy entropy
\begin{equation}
h(s) = \frac{H(\pi_\theta(\cdot \mid s))}{\log |\mathcal{A}|}
\end{equation}
as a low-overhead online proxy for state selection (Eq.~\eqref{eq:entropy_gate}). Under the centered-isotropy condition of Theorem~\ref{thm:entropy_susceptibility}, policy entropy induces the same susceptibility ranking as the Fisher-trace criterion for binary action spaces. For multi-action tasks, entropy serves as a practical output-uncertainty proxy rather than an exact susceptibility measure. In the LunarLander experiments, this gate induces an average empirical intervention rate of \(\rho \approx 5.7\%\).

\section{System Details, Benchmark Environments, and Baseline Aggregations}
\label{app:system_details}

This section specifies the benchmark control environments, evaluation dynamics, server-side aggregation baselines, and implementation hyperparameters used throughout the paper.

\subsection{Benchmark Environments and Control Dynamics}
\label{app:environments}

Table~\ref{tab:environments} summarizes the structural specifications of the benchmark environments.

\begin{table}[h]
\centering
\small
\renewcommand{\arraystretch}{1.15}
\begin{tabular}{lcccc}
\toprule
\textbf{Environment} & \textbf{State Dim.} \(d_s\) & \textbf{Action Space} \(\mathcal{A}\) & \textbf{Episode Horizon} & \textbf{Task Criterion} \\
\midrule
LunarLander-v2 & 8 & Discrete(4) & 1000 & Return \(\ge 200\) \\
CartPole-v1 & 4 & Discrete(2) & 500 & Return \(= 500\) \\
Acrobot-v1 & 6 & Discrete(3) & 500 & Return \(\ge -100\) \\
LunarLanderContinuous-v2 & 8 & Box(2) & 1000 & Return \(\ge 200\) \\
\bottomrule
\end{tabular}
\caption{\textbf{Environment specifications.} Benchmark tasks covering binary-action, multi-action, and continuous-control settings.}
\label{tab:environments}
\end{table}

\paragraph{CartPole-v1 (\(|\mathcal{A}|=2\)) \citep{barto1983neuronlike, brockman2016openai}.}
CartPole models an unactuated pole hinged to a cart moving along a frictionless horizontal track. The state is four-dimensional, \(s = (x, \dot{x}, \theta, \dot{\theta})\), comprising cart position, cart velocity, pole angle, and pole angular velocity. The discrete action space is \(\mathcal{A} = \{0, 1\}\), corresponding to applying a constant horizontal force of \(10\,\mathrm{N}\) to the left or right, respectively. The system awards a reward of \(+1\) per survival step. An episode terminates when \(|x| > 2.4\), \(|\theta| > 12^\circ\), or the maximum horizon limit \(H = 500\) is reached, yielding a maximum nominal return of \(500\).

\paragraph{Acrobot-v1 (\(|\mathcal{A}|=3\)) \citep{sutton1995generalization, brockman2016openai}.}
Acrobot consists of two serial links operating in a vertical plane, with torque actuation restricted strictly to the second joint. The observation space is six-dimensional, \(s = (\cos\theta_1, \sin\theta_1, \cos\theta_2, \sin\theta_2, \dot{\theta}_1, \dot{\theta}_2)\), and the action space is \(\mathcal{A} = \{0, 1, 2\}\), providing actuated torques of \(-1\), \(0\), or \(+1\,\mathrm{N\cdot m}\). The objective is to swing the lower link tip above a designated clearance line:
\begin{equation}
  -\cos(\theta_1) - \cos(\theta_1 + \theta_2) > 1.0.
\end{equation}
Each non-terminal step incurs a penalty of \(-1\), while achieving target clearance terminates the episode with reward \(0\). Episode returns are bounded within \([-500, 0]\) under a horizon limit of \(H = 500\).

\paragraph{LunarLander-v2 (\(|\mathcal{A}|=4\)) \citep{brockman2016openai}.}
LunarLander simulates the trajectory guidance of a landing module descending toward a designated landing pad. The state space is eight-dimensional, \(s = (x, y, \dot{x}, \dot{y}, \theta, \dot{\theta}, c_1, c_2)\), where \(c_1, c_2 \in \{0, 1\}\) denote ground contact indicators for the two landing legs. The action space is discrete, \(\mathcal{A} = \{0, 1, 2, 3\}\), corresponding to firing no engine, the left orientation booster, the main thruster, or the right orientation booster. The reward function balances distance-to-pad and velocity shaping with engine firing costs; successful touchdown awards \(+100\), whereas crashing incurs a \(-100\) penalty. Episodes terminate upon successful landing, module destruction, exceeding horizontal limits (\(|x| \ge 1.0\)), or reaching \(H = 1000\).

\paragraph{Continuous LunarLander (LunarLanderContinuous-v2) \citep{brockman2016openai}.}
The exploratory continuous-control setting shares identical underlying dynamics and state variables with the discrete task, but defines an action space \(\mathcal{A} = [-1, 1]^2\). The coordinate \(a_1 \in [-1, 1]\) governs main thruster throttle: values in \([-1, 0]\) disable thrust, while \((0, 1]\) scales thrust linearly from \(50\%\) to \(100\%\). The lateral coordinate \(a_2 \in [-1, 1]\) controls side thrusters: values in \((-0.5, 0.5)\) turn boosters off, while \(a_2 \le -0.5\) and \(a_2 \ge 0.5\) activate the left and right thrusters, respectively. This setting is examined in Appendix~\ref{app:continuous_study} to evaluate the operational boundaries of entropy gating under continuous Gaussian policies.

\paragraph{Evaluation Dynamics Protocol (\(d_0, d_1, d_2\)).}
To evaluate system resilience under environmental variation, all benchmark evaluations are conducted across three fixed dynamic realizations, denoted \(d_0, d_1, d_2\). The same dynamic indices are assigned identically between nominal clean baselines and attacked runs, allowing degradation \(D\) to be reported per condition alongside summary statistics.

\subsection{Server-Side Robust Aggregation Rules}
\label{app:aggregation_rules}

Let \(\{u_i^t\}_{i=1}^{N}\) denote the update vectors submitted by \(N\) clients at communication round \(t\), where \(u_i^t \in \mathbb{R}^d\). For standard aggregation rules, these represent local parameter updates; for FedPG-BR, they correspond to local policy-gradient estimates. An aggregation rule \(\operatorname{Agg}(\cdot)\) produces the global update vector \(u_{\mathrm{global}}^t\).

\paragraph{Federated Averaging (FedAvg) \citep{mcmahan2017communication}.}
FedAvg computes the coordinate-wise arithmetic mean of all submitted vectors:
\begin{equation}
  u_{\mathrm{global}}^t = \frac{1}{N} \sum_{i=1}^{N} u_i^t.
\end{equation}
No Byzantine filtering is applied.

\paragraph{Coordinate-Wise Trimmed Mean \citep{yin2018byzantine}.}
For each coordinate \(j \in \{1, \dots, d\}\), the server sorts \(\{u_{1,j}^t, \dots, u_{N,j}^t\}\) in non-decreasing order, discards the \(c\) lowest and \(c\) highest values, and computes the mean of the remaining \(N - 2c\) elements:
\begin{equation}
  [u_{\mathrm{global}}^t]_j = \frac{1}{N - 2c} \sum_{r = c + 1}^{N - c} u_{(r), j}^t,
\end{equation}
where \(u_{(r), j}^t\) denotes the \(r\)-th order statistic for coordinate \(j\). In ensemble evaluations (6 clients per group), we set \(c = 1\); in non-ensemble evaluations (\(N=30\)), we set \(c = 5\).

\paragraph{Coordinate-Wise Median \citep{yin2018byzantine}.}
The server computes the median along each parameter coordinate independently:
\begin{equation}
  [u_{\mathrm{global}}^t]_j = \operatorname{median}\left( \{u_{i, j}^t\}_{i=1}^{N} \right).
\end{equation}
For an even number of inputs, the median is taken as the mean of the two central values.

\paragraph{FedPG-BR \citep{fan2021fault}.}
FedPG-BR filters submitted policy-gradient vectors using directional and magnitude alignment against a robust reference. The server determines candidate coordinate medians and selects a reference update vector. Gradients whose Euclidean distance from this reference exceeds a threshold are filtered out, and the remaining updates are averaged. In our implementation, FedPG-BR is applied to submitted policy gradients within each evaluated client pool.

\paragraph{Ensemble Subgroup Voting \citep{fang2025provably}.}
In ensemble configurations, the \(N = 30\) clients are partitioned into \(K = 5\) disjoint subgroups, \(\mathcal{G}_1, \dots, \mathcal{G}_5\), with 6 clients per group (including 2 compromised clients per group). Each subgroup independently executes an aggregation rule to optimize its policy network \(\pi_{\theta_k}\). At test time, subgroup policies evaluate environment states independently, and the executed action is determined via majority voting:
\begin{equation}
  a^*(s) = \arg\max_{a \in \mathcal{A}} \sum_{k=1}^{K} \mathbf{1}\left[ a = \arg\max_{a' \in \mathcal{A}} \pi_{\theta_k}(a' \mid s) \right].
\end{equation}
This architecture enforces decision-level redundancy across subgroup policies rather than parameter-space consensus across the full client population.

\begin{table}[h]
\centering
\small
\renewcommand{\arraystretch}{1.18}
\begin{tabular}{ll}
\toprule
\textbf{Component} & \textbf{Specification / Value} \\
\midrule
\multicolumn{2}{l}{\textbf{Policy Architecture \& Optimization}} \\
Network Architecture & MLP: 2 hidden layers, 64 units each, Tanh activations \\
Output Layer & Discrete: categorical logits (\(\mathbb{R}^{|\mathcal{A}|}\)); Continuous: Gaussian mean and log-variance \\
Optimizer & Adam (\(\beta_1 = 0.9, \beta_2 = 0.999\)), learning rate \(\eta = 10^{-3}\) \\
Discount Factor \(\gamma\) & \(0.99\) \\
Rollout Budget & Mini-batch size \(b = 8\); inner steps sampled via geometric schedule (\(B=32\)) \\
Episode Horizon & Environment-specific (Table~\ref{tab:environments}) \\
\midrule
\multicolumn{2}{l}{\textbf{Federated Consensus and Robust Aggregation}} \\
Federation Scale & \(N = 30\) total clients, \(M = 10\) Byzantine clients (\(\beta = 1/3\)) \\
Ensemble Configuration & \(K = 5\) groups (6 clients per group; 2 Byzantine clients per group) \\
FedPG-BR Parameters & Reference variance \(\sigma_{\mathrm{ref}} = 0.07\), threshold scales \(\hat{\lambda} = 1.0, \hat{\zeta} = 0.02\) \\
Coordinate-wise Median & Coordinate-wise median across client gradient vectors \\
Trimmed Mean & Coordinate-wise trimming with \(c = 1\) (ensemble) or \(c = 5\) (non-ensemble) \\
Total Training Rounds & \(T = 261\) global communication rounds \\
\midrule
\multicolumn{2}{l}{\textbf{\textsc{V-BSA} Attack Parameters}} \\
State Selection Gate & Policy entropy threshold \(\tau = 0.9\) (normalized by \(\log |\mathcal{A}|\)) \\
Behavioral Steering & Steering loss \(\mathcal{L}_{\mathrm{steer}}\), unit scaling weight, target derived via \(\operatorname{group\_id} \pmod{|\mathcal{A}|}\) \\
Norm Matching & \(\ell_2\)-norm aligned to benign local gradient \(\|g_i^t\|_2\) \\
Empirical Envelope & EMA smoothing coefficient \(\alpha_{\mathrm{ema}} = 0.1\), coordinate clipping scale \(\lambda_{\mathrm{box}} = 0.85\) \\
\midrule
\multicolumn{2}{l}{\textbf{Evaluation Protocol}} \\
Terminal Return \(J\) & Evaluated over 10 independent test rollouts at round \(T=261\) \\
Degradation Metric & \(D \triangleq J_{\mathrm{clean}} - J_{\mathrm{attacked}}\) \\
\bottomrule
\end{tabular}
\caption{\textbf{Hyperparameter configuration.} Fixed parameters applied across federated reinforcement learning experiments.}
\label{tab:hparams}
\end{table}

\subsection{Federated Learning and Training Hyperparameters}
\label{app:environments:hparams}

\paragraph{Policy Network Architecture and Scope.}
The benchmark control environments operate over low-dimensional physical state vectors (\(d_s \le 8\)) rather than spatial image grids, for which multi-layer perceptrons (MLPs) constitute the standard, task-appropriate policy representation. Following established practice in Byzantine-robust federated reinforcement learning \citep{fan2021fault,fang2025provably}, we standardize the policy architecture across all discrete and continuous benchmarks to a two-layer MLP with 64 hidden units per layer and Tanh activations. Prior Byzantine FRL literature typically fixes a task-appropriate policy architecture to isolate the interaction between adversarial updates and server-side aggregation rules, avoiding network architecture as a confounding experimental factor. Formally, the operators of \textsc{V-BSA} interface with any parameterized differentiable policy \(\pi_\theta\): the \textsc{when} gate evaluates output-space dispersion via policy entropy, behavioral steering evaluates standard policy gradients \(-\nabla_\theta \mathcal{L}_{\mathrm{steer}}\), and the empirical envelope constrains coordinate-wise deviations in parameter space. We report our empirical findings under this standard architecture as a controlled experimental baseline; evaluating the interaction between state-selective timing and alternative policy parameterizations remains an avenue for future work.

Table~\ref{tab:hparams} details the complete hyperparameters utilized across nominal training, Byzantine aggregation, and the \textsc{V-BSA} attack.

\subsection{Compute Environment}
\label{app:environments:compute}

All experiments were implemented in Python using PyTorch and executed on Linux compute nodes equipped with NVIDIA RTX 5090 GPUs or 48GB GPU instances. Each training configuration was run as a single federated-learning process, with the 30 simulated clients executing their local rollouts and policy updates sequentially within each communication round. A complete 261-round training run typically required approximately 1.5--3.5 hours, depending on the control environment and rollout length.

\section{Complete Training Dynamics Across Aggregations}
\label{app:training_dynamics}

This section provides the complete empirical results underlying the terminal degradation comparisons reported in Table~\ref{tab:main_results}. We report detailed numerical breakdowns, timing schedule ablations, and full convergence trajectories across tasks and aggregations.

\subsection{Detailed Numerical Breakdown and Dynamics Analysis}
\label{app:dynamics:ll_breakdown}

Tables~\ref{tab:ll_ens} and~\ref{tab:ll_nbe} provide the numerical breakdown of clean terminal returns (\(J_{\mathrm{ctl}}\)), attacked terminal returns (\(J_{\mathrm{atk}}\)), and resulting degradation (\(D\)) across the three evaluation dynamics for LunarLander-v2. Beyond aggregate statistics, we examine the strongest ensemble configuration—LunarLander with FedPG-BR—to illustrate how the observed degradation arises at the level of the federated sequential decision process.

\paragraph{Why Ensemble FedPG-BR Resists Conventional Update-Space Attacks.}
A representative benchmark is provided by the Byzantine-robust ensemble defense studied by Fang et al.~\citep{fang2025provably}. Their Normalized attack explicitly maximizes the angular deviation between benign and poisoned updates in parameter space to disrupt policy learning. In their defense architecture, clients are partitioned into independent subgroups (\(K=5\)), each learning an isolated policy via FedPG-BR, with final decisions resolved via test-time majority voting. This design establishes a dual barrier against update-space poisoning:
\begin{enumerate}[label=(\roman*), leftmargin=1.5em, itemsep=1pt]
\item \emph{Intra-group filtering:} Update-space attacks distribute parameter perturbations across all state-action pairs indiscriminately, yielding gradient estimates whose global Euclidean or angular deviation is readily excised by FedPG-BR's reference-distance filter within each subgroup;
\item \emph{Decision-level consensus:} Even if conservative parameter shifts evade local filtering, perturbing the parameters of a minority of subgroups fails to alter the final majority vote, as uncompromised subgroup policies maintain consensus on routine states.
\end{enumerate}

\paragraph{Penetration via Trajectory-Level Behavioral Steering.}
The failure of this dual defense against \textsc{V-BSA} stems from the sequential nature of reinforcement learning. With five groups and two Byzantine clients per group, \textsc{V-BSA} operates inside every group under a defense-agnostic boundary. Rather than attempting to override the server aggregation or maximize parameter divergence, \textsc{V-BSA} applies targeted behavioral steering strictly to a sparse subset of decision states (\(\rho \approx 5.7\%\)) selected via policy entropy. 

Because routine states remain unperturbed, the resulting malicious update closely aligns with nominal gradient scales and satisfies the empirical coordinate envelope \(\mathcal{B}_i^t\), successfully passing FedPG-BR filtering alongside benign peers within each subgroup. Once accepted, the perturbation does not terminate at parameter aggregation; instead, it alters subsequent state-action visitation throughout future local rollouts. Thus, the relevant causal propagation path shifts from single-step parameter distortion to sequential behavioral coupling:
\begin{align*}
\text{\emph{Update-space:}} \quad & \text{malicious update} \;\longrightarrow\; \text{aggregated parameter}, \\
\text{\emph{Trajectory-level:}} \quad & \text{selected states} \;\longrightarrow\; \text{behavioral steering} \;\longrightarrow\; \text{local update} \\
& \quad \;\longrightarrow\; \text{subgroup policy} \;\longrightarrow\; \text{trajectory drift}.
\end{align*}
Because a perturbed action at a critical trajectory bifurcation shifts the subsequent state trajectory, the empirical impact accumulates across consecutive communication rounds rather than remaining confined to a single parameter update.

\paragraph{Empirical Realization and Microscopic Telemetry on LunarLander.}
This trajectory-level mechanism translates into severe terminal disruption under ensemble FedPG-BR. Across dynamics \(d_0, d_1, d_2\), clean baseline returns of \((228.6, 211.9, 245.4)\) degrade to \((80.5, 74.4, -125.2)\) under attack (\(D = \{148.17, 137.48, 370.59\}\), \(\bar{D} = 218.75\)), culminating in complete mission failure on \(d_2\) (\(J_{\mathrm{atk}} = -125.2\), \(D = 370.59\)). Step-level telemetry logged at round \(T=261\) on \(d_2\) directly reveals how this degradation arises from the progressive dismantling of decision-level consensus. Whereas nominal evaluation rollouts (\(\bar{J}_{\mathrm{ctl}} = 242.50\)) rarely deadlock—producing top-two ties in only \(7.9\%\) of decisions (333/4200 steps) with splits typically resolved by decisive majorities—\textsc{V-BSA} drives split votes to \(98.2\%\) (1818/1851 steps) and deadlocked top-two ties to \(38.0\%\) (704/1851 steps), collapsing mean returns to \(-66.71\) and trajectory lengths from 420.0 to 185.1 steps.

A concrete kinematic bifurcation is illustrated in Episode 2 (clean return \(+213.42\) vs.\ attacked return \(-207.28\)). At step 27, the descending module enters a critical hover-adjustment state:
\begin{equation*}
\begin{aligned}
s_{27} = [ & x = -0.029, \; y = 1.295, \; v_x = -0.109, \; v_y = -0.538, \\
           & \theta = 0.033, \; \dot{\theta} = 0.024, \; c_1 = 0, \; c_2 = 0 ].
\end{aligned}
\end{equation*}
At this state, Group 1's policy entropy spikes to \(H = 1.282\), exceeding the unnormalized selection threshold (\(0.9\ln 4 \approx 1.2477\)), while the remaining four groups remain far below the gate (\(\max H \le 0.158\)). In test-time majority voting, the five groups split their votes as \([0, 1, 2, 2, 0]\), producing a tie between main thruster activation (action 2) and engine idling (action 0) that resolves to idling. Failing to fire the main thruster while descending at \(v_y = -0.538\,\mathrm{m/s}\) forfeits the critical deceleration window and accelerates downward descent. Subsequent attitude corrections collapse: by step 228, the lander over-rotates to \(\theta = 0.63\,\mathrm{rad}\) (\(>36^\circ\)) with a single dragged leg (\(c_1=1, c_2=0\)), violently crashing at the boundary (\(x = -1.00\), \(J_{\mathrm{atk}} = -207.28\)). Episode 4 exhibits an identical failure mode, where an idling steering tie at step 32 (\(y=1.318\)) culminates in severe attitude rollover (\(\theta = 1.66\,\mathrm{rad}\), crashing at return \(-163.59\)). These step-level dynamics confirm that \textsc{V-BSA} operates through a fundamentally distinct pathway from update-space attacks \citep{fang2025provably}: rather than attempting to overcome robust filtering via large parameter deviations, it selectively induces behavioral paralysis at trajectory bifurcations, allowing temporal error propagation within the MDP to trigger irreversible kinematic failure.

\begin{table}[htbp]
\centering
\small
\setlength{\tabcolsep}{4.5pt}
\renewcommand{\arraystretch}{1.15}
\begin{tabular}{lccccccccc}
\toprule
& \multicolumn{3}{c}{\textbf{Dynamics} \(d_0\)} & \multicolumn{3}{c}{\textbf{Dynamics} \(d_1\)} & \multicolumn{3}{c}{\textbf{Dynamics} \(d_2\)} \\
\cmidrule(lr){2-4}\cmidrule(lr){5-7}\cmidrule(lr){8-10}
\textbf{Aggregation Rule} & \(J_{\mathrm{ctl}}\) & \(J_{\mathrm{atk}}\) & \(D\) & \(J_{\mathrm{ctl}}\) & \(J_{\mathrm{atk}}\) & \(D\) & \(J_{\mathrm{ctl}}\) & \(J_{\mathrm{atk}}\) & \(D\) \\
\midrule
FedPG-BR & 228.6 & 80.5 & 148.17 & 211.9 & 74.4 & 137.48 & 245.4 & \(-125.2\) & 370.59 \\
Trimmed Mean & 236.2 & 34.1 & 202.08 & 196.7 & \(-17.0\) & 213.62 & 209.7 & 103.5 & 106.18 \\
Median & 167.1 & \(-17.7\) & 184.82 & 233.3 & 103.6 & 129.71 & 216.5 & 3.5 & 212.95 \\
FedAvg & 243.3 & 167.3 & 76.00 & 224.7 & 112.6 & 112.10 & 194.5 & 142.8 & 51.60 \\
\bottomrule
\end{tabular}
\caption{\textbf{Discrete LunarLander-v2 evaluation breakdown under ensemble training (\(K=5, M=10\)).} Terminal evaluation returns and degradation \(D = J_{\mathrm{ctl}} - J_{\mathrm{atk}}\) across evaluation dynamics.}
\label{tab:ll_ens}
\end{table}

\begin{table}[htbp]
\centering
\small
\setlength{\tabcolsep}{4.5pt}
\renewcommand{\arraystretch}{1.15}
\begin{tabular}{lccccccccc}
\toprule
& \multicolumn{3}{c}{\textbf{Dynamics} \(d_0\)} & \multicolumn{3}{c}{\textbf{Dynamics} \(d_1\)} & \multicolumn{3}{c}{\textbf{Dynamics} \(d_2\)} \\
\cmidrule(lr){2-4}\cmidrule(lr){5-7}\cmidrule(lr){8-10}
\textbf{Aggregation Rule} & \(J_{\mathrm{ctl}}\) & \(J_{\mathrm{atk}}\) & \(D\) & \(J_{\mathrm{ctl}}\) & \(J_{\mathrm{atk}}\) & \(D\) & \(J_{\mathrm{ctl}}\) & \(J_{\mathrm{atk}}\) & \(D\) \\
\midrule
FedPG-BR & 162.7 & \(-59.2\) & 221.91 & 240.4 & \(-115.1\) & 355.48 & 225.1 & \(-61.5\) & 286.64 \\
Trimmed Mean & 258.4 & \(-12.3\) & 270.70 & 249.1 & \(-15.5\) & 264.60 & 233.9 & 189.1 & 44.80 \\
Median & 234.4 & \(-287.2\) & 521.60 & 210.3 & \(-168.1\) & 378.40 & 225.8 & \(-177.2\) & 403.00 \\
FedAvg & 216.3 & 255.0 & \(-38.70\) & 198.2 & 269.1 & \(-70.90\) & 201.5 & 263.4 & \(-61.90\) \\
\bottomrule
\end{tabular}
\caption{\textbf{Discrete LunarLander-v2 evaluation breakdown under non-ensemble training (\(N=30, M=10\)).} Terminal returns and degradation \(D\) across evaluation dynamics.}
\label{tab:ll_nbe}
\end{table}

\subsection{Analysis of Timing Ablations}
\label{app:dynamics:when}

To isolate the effect of state selection, we compare entropy gating against alternative timing schedules under identical \textsc{how} construction on LunarLander with FedPG-BR:
\begin{itemize}[leftmargin=1.5em, itemsep=1pt]
\item \textbf{Entropy Gating (\(\tau = 0.9\)):} Operates at an empirical intervention rate of \(\rho \approx 5.7\%\), producing terminal degradation of \(D = \{148.17, 137.48, 370.59\}\) across dynamics (mean \(\bar{D} = 218.75\), median \(148.17\)).
\item \textbf{Dense Poisoning (\(\rho = 100\%\)):} Intervenes at every rollout step, yielding \(D = \{-4.28, 36.65, 49.78\}\) (mean \(\bar{D} = 27.38\), median \(36.65\)).
\item \textbf{Uniform Random Selection (\(\rho = 15\%\)):} Selects states uniformly with probability \(p = 0.15\), yielding \(D = \{-15.89, 50.01, 273.79\}\) (mean \(\bar{D} = 102.64\), median \(50.01\)).
\item \textbf{Budget-Matched Bernoulli Selection (\(\rho \approx 5.7\%\)):} Matches the empirical intervention rate of entropy gating via uniform random selection on dynamics \(d_0\), yielding \(D = 91.99\) (compared to \(148.17\) for entropy gating).
\end{itemize}
These comparisons show that, with the update construction mechanism held fixed, varying the trajectory-level intervention schedule materially alters downstream policy degradation.

\subsection{Complete Convergence Trajectories}
\label{app:dynamics:curves}

Figures~\ref{fig:ens_cp}--\ref{fig:nbe_ll} provide the complete rolling evaluation trajectories over all communication rounds and evaluated task--aggregation configurations. In each plot, solid curves denote attacked training under \textsc{V-BSA}, while dashed curves denote the corresponding clean baseline. Beyond the terminal degradation reported in the main results, these trajectories expose how performance divergence develops over training, specifically distinguishing progressive, sustained performance degradation from transient endpoint fluctuations. 

Across configurations, the trajectories are consistent with the task- and aggregation-dependent boundaries reported in Section~\ref{sec:exp_results}. Settings with large terminal degradation generally exhibit an early, sustained separation between attacked and clean trajectories, whereas configurations with small or negative \(D\) display tightly tracking or non-persistent gaps. Across all panels, curves display the 10-round rolling mean of evaluation returns, with the terminal points at round \(T=261\) matching the evaluation returns used to compute the degradation statistics in Table~\ref{tab:main_results} (with complete per-dynamics breakdowns provided in Tables~\ref{tab:ll_ens} and~\ref{tab:ll_nbe}).

\vspace{-0.5em}

\begin{figure}[htbp]
\centering
\begin{subfigure}[t]{0.48\linewidth}
  \centering
  \includegraphics[width=\linewidth]{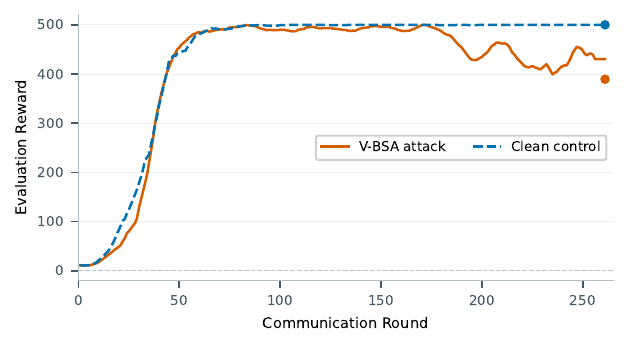}
  \caption{FedPG-BR}
  \label{fig:ens_cp_fedpgbr}
\end{subfigure}\hfill
\begin{subfigure}[t]{0.48\linewidth}
  \centering
  \includegraphics[width=\linewidth]{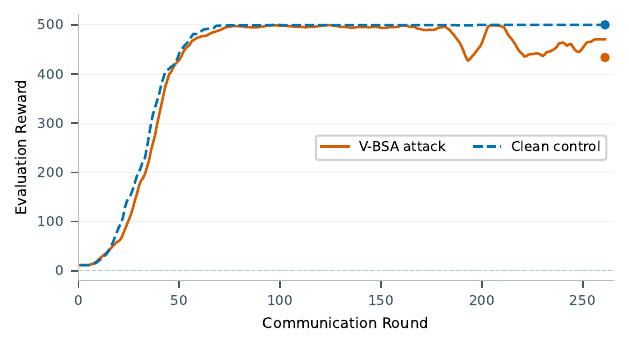}
  \caption{Trimmed Mean}
  \label{fig:ens_cp_enst}
\end{subfigure}

\vspace{0.5em}
\begin{subfigure}[t]{0.48\linewidth}
  \centering
  \includegraphics[width=\linewidth]{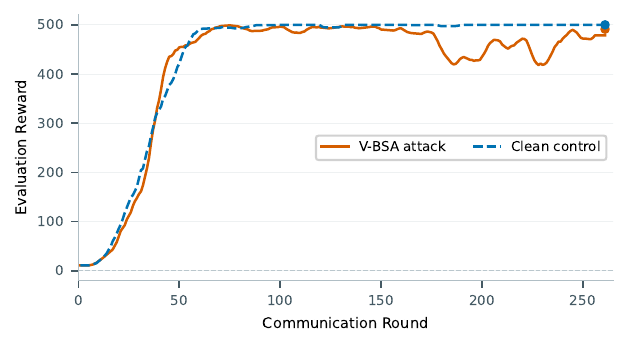}
  \caption{Median}
  \label{fig:ens_cp_ensm}
\end{subfigure}\hfill
\begin{subfigure}[t]{0.48\linewidth}
  \centering
  \includegraphics[width=\linewidth]{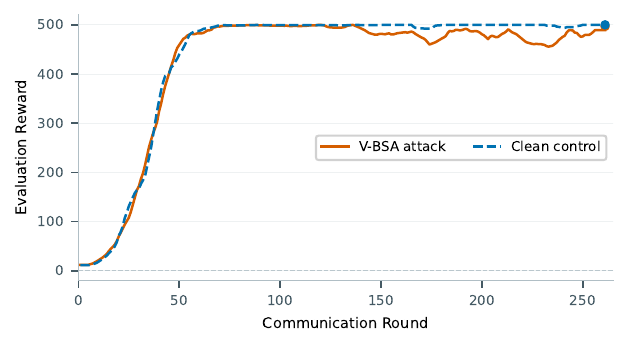}
  \caption{FedAvg}
  \label{fig:ens_cp_fedavg}
\end{subfigure}
\vspace{-0.3em}
\caption{\textbf{Ensemble paradigm on CartPole (\(K=5, M=10\)).} Evaluation trajectories across aggregation rules. Solid curves: \textsc{V-BSA} attack; dashed curves: clean control.}
\label{fig:ens_cp}
\end{figure}

\begin{figure}[htbp]
\centering
\begin{subfigure}[t]{0.48\linewidth}
  \centering
  \includegraphics[width=\linewidth]{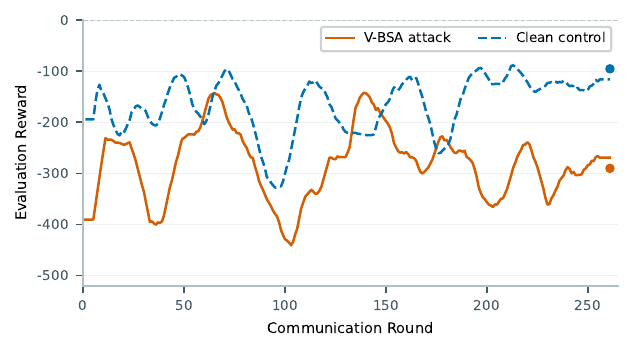}
  \caption{FedPG-BR}
  \label{fig:ens_acr_fedpgbr}
\end{subfigure}\hfill
\begin{subfigure}[t]{0.48\linewidth}
  \centering
  \includegraphics[width=\linewidth]{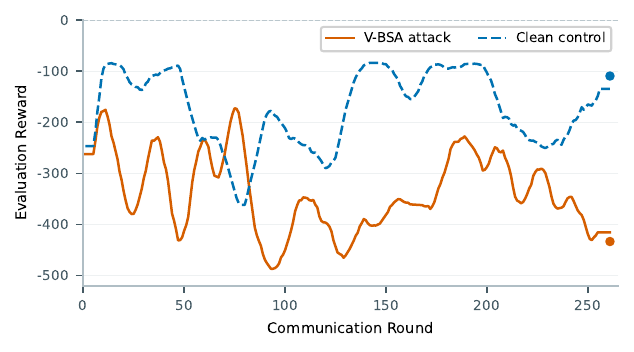}
  \caption{Trimmed Mean}
  \label{fig:ens_acr_enst}
\end{subfigure}

\vspace{0.5em}
\begin{subfigure}[t]{0.48\linewidth}
  \centering
  \includegraphics[width=\linewidth]{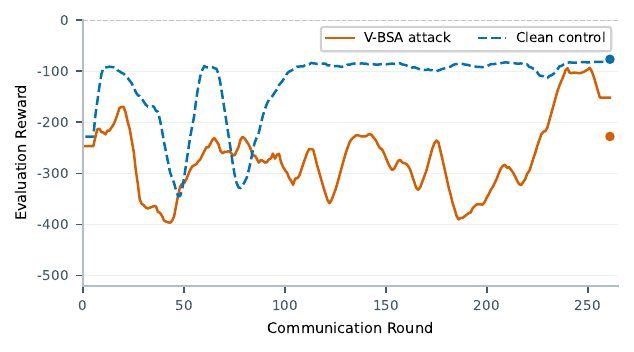}
  \caption{Median}
  \label{fig:ens_acr_ensm}
\end{subfigure}\hfill
\begin{subfigure}[t]{0.48\linewidth}
  \centering
  \includegraphics[width=\linewidth]{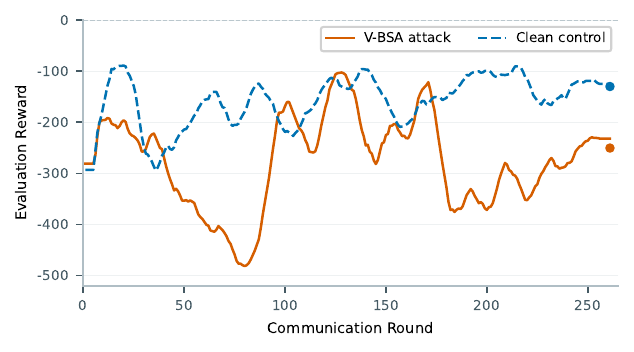}
  \caption{FedAvg}
  \label{fig:ens_acr_fedavg}
\end{subfigure}
\vspace{-0.3em}
\caption{\textbf{Ensemble paradigm on Acrobot (\(K=5, M=10\)).} Evaluation trajectories across aggregation rules. Solid curves: \textsc{V-BSA} attack; dashed curves: clean control.}
\label{fig:ens_acr}
\end{figure}

\begin{figure}[htbp]
\centering
\begin{subfigure}[t]{0.48\linewidth}
  \centering
  \includegraphics[width=\linewidth]{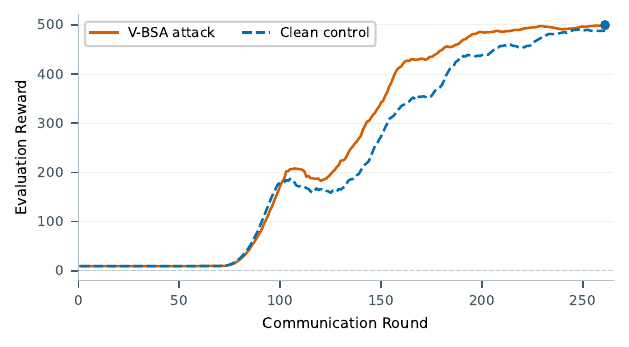}
  \caption{FedAvg}
  \label{fig:nbe_cp_favg}
\end{subfigure}\hfill
\begin{subfigure}[t]{0.48\linewidth}
  \centering
  \includegraphics[width=\linewidth]{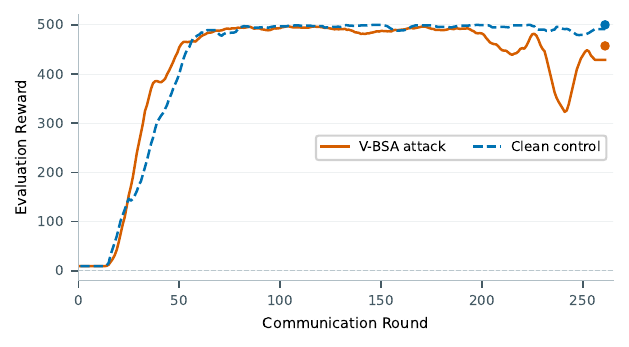}
  \caption{Median}
  \label{fig:nbe_cp_med}
\end{subfigure}

\vspace{0.5em}
\begin{subfigure}[t]{0.48\linewidth}
  \centering
  \includegraphics[width=\linewidth]{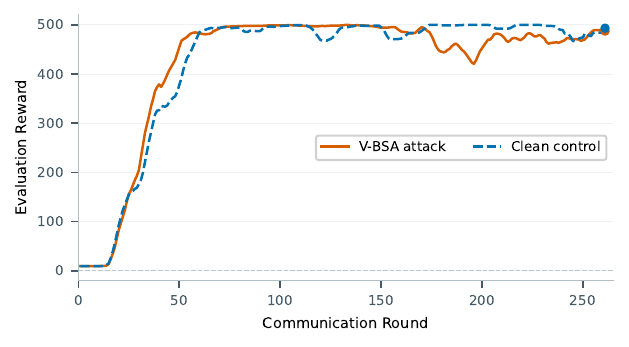}
  \caption{Trimmed Mean}
  \label{fig:nbe_cp_trm}
\end{subfigure}\hfill
\begin{subfigure}[t]{0.48\linewidth}
  \centering
  \includegraphics[width=\linewidth]{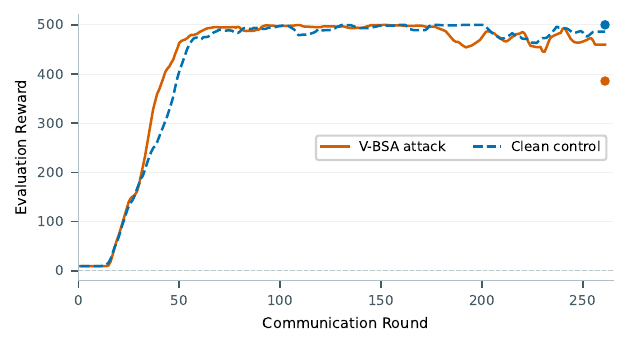}
  \caption{FedPG-BR}
  \label{fig:nbe_cp_fpbr}
\end{subfigure}
\vspace{-0.3em}
\caption{\textbf{Non-ensemble paradigm on CartPole (\(N=30, M=10\)).} Evaluation trajectories across aggregation rules. Solid curves: \textsc{V-BSA} attack; dashed curves: clean control.}
\label{fig:nbe_cp}
\end{figure}

\begin{figure}[htbp]
\centering
\begin{subfigure}[t]{0.48\linewidth}
  \centering
  \includegraphics[width=\linewidth]{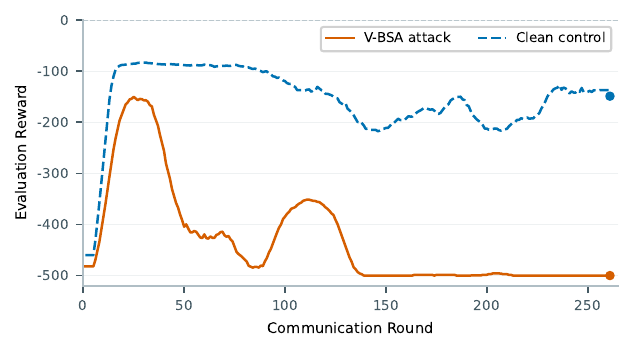}
  \caption{FedAvg}
  \label{fig:nbe_acr_favg}
\end{subfigure}\hfill
\begin{subfigure}[t]{0.48\linewidth}
  \centering
  \includegraphics[width=\linewidth]{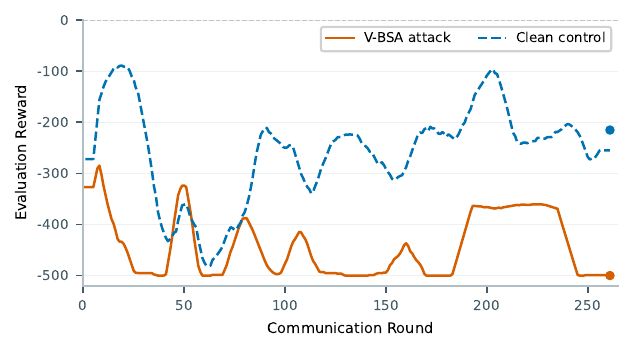}
  \caption{Median}
  \label{fig:nbe_acr_med}
\end{subfigure}

\vspace{0.5em}
\begin{subfigure}[t]{0.48\linewidth}
  \centering
  \includegraphics[width=\linewidth]{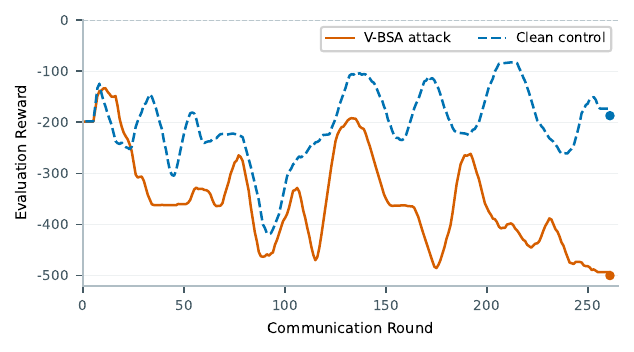}
  \caption{Trimmed Mean}
  \label{fig:nbe_acr_trm}
\end{subfigure}\hfill
\begin{subfigure}[t]{0.48\linewidth}
  \centering
  \includegraphics[width=\linewidth]{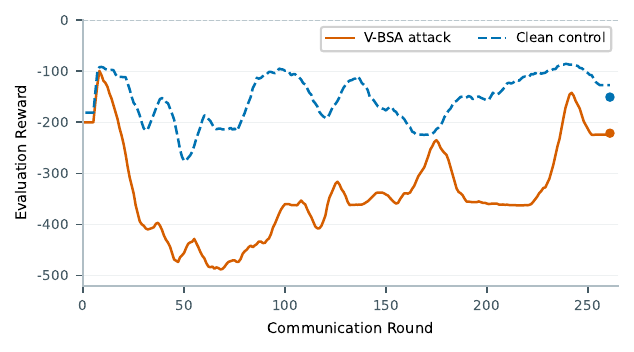}
  \caption{FedPG-BR}
  \label{fig:nbe_acr_fpbr}
\end{subfigure}
\vspace{-0.3em}
\caption{\textbf{Non-ensemble paradigm on Acrobot (\(N=30, M=10\)).} Evaluation trajectories across aggregation rules. Solid curves: \textsc{V-BSA} attack; dashed curves: clean control.}
\label{fig:nbe_acr}
\end{figure}

\begin{figure}[htbp]
\centering
\begin{subfigure}[t]{0.48\linewidth}
  \centering
  \includegraphics[width=\linewidth]{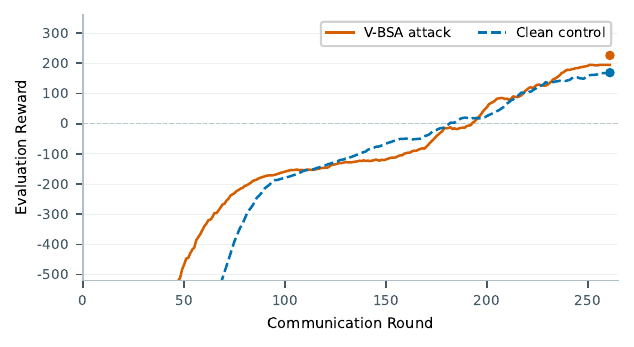}
  \caption{FedAvg}
  \label{fig:nbe_ll_favg}
\end{subfigure}\hfill
\begin{subfigure}[t]{0.48\linewidth}
  \centering
  \includegraphics[width=\linewidth]{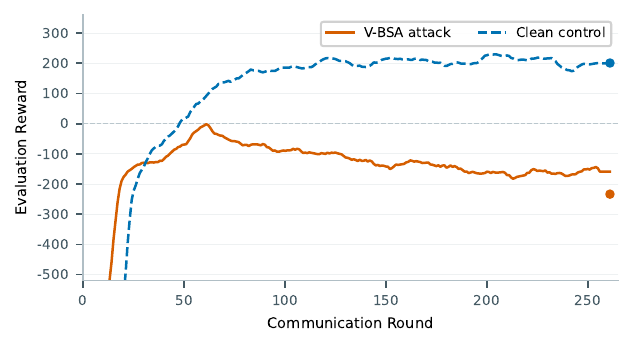}
  \caption{Median}
  \label{fig:nbe_ll_med}
\end{subfigure}

\vspace{0.5em}
\begin{subfigure}[t]{0.48\linewidth}
  \centering
  \includegraphics[width=\linewidth]{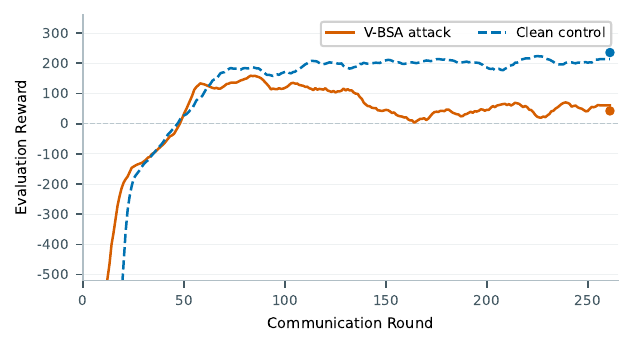}
  \caption{Trimmed Mean}
  \label{fig:nbe_ll_trm}
\end{subfigure}\hfill
\begin{subfigure}[t]{0.48\linewidth}
  \centering
  \includegraphics[width=\linewidth]{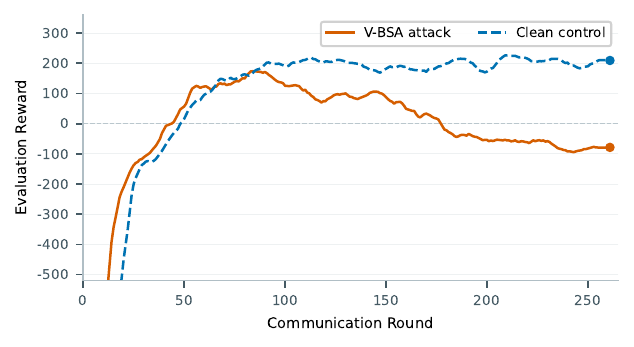}
  \caption{FedPG-BR}
  \label{fig:nbe_ll_fpbr}
\end{subfigure}
\vspace{-0.3em}
\caption{\textbf{Non-ensemble paradigm on discrete LunarLander-v2 (\(N=30, M=10\)).} Evaluation trajectories across aggregation rules. Solid curves: \textsc{V-BSA} attack; dashed curves: clean control.}
\label{fig:nbe_ll}
\end{figure}

\clearpage

\section{Exploratory Study on Continuous Action Spaces}
\label{app:continuous_study}

To examine whether the state-selective intervention framework extends to continuous control, we conduct an exploratory study on LunarLanderContinuous-v2, which shares the underlying physics and reward specification of LunarLander-v2 but operates over continuous action bounds \(\mathcal{A} = \operatorname{Box}(2)\). Policies parameterize a Gaussian distribution \(\pi_\theta(a \mid s) = \mathcal{N}(\mu_\theta(s), \Sigma)\).

\subsection{Limitations of Continuous Uncertainty Gating}
\label{app:cont:gate_saturation}

In discrete action spaces, categorical policy entropy naturally quantifies dispersion across competing discrete choices. Extending this principle to continuous Gaussian policies encounters structural limitations:
\begin{itemize}[leftmargin=1.5em, itemsep=1pt]
\item \textbf{Spatial Invariance under State-Independent Variance:} When policy variance \(\Sigma = \operatorname{Diag}(\sigma^2)\) is parameterized independently of the input state, the differential entropy \(H(\pi_\theta(\cdot \mid s)) = \frac{1}{2} \sum_{i=1}^{d_a} \log(2\pi e \sigma_i^2)\) is constant across all states, providing no discriminatory signal for trajectory-level state selection.
\item \textbf{Gate Saturation with State-Dependent Heads:} When employing a state-conditioned variance head \(\sigma_\theta(s)\), exploratory variance remains elevated throughout the state space during training. Thresholding on local variance triggers on nearly all visited trajectory steps (\(\rho \to 1.0\)), collapsing sparse state gating back into dense poisoning.
\end{itemize}

\subsection{Continuous Behavioral Steering}
\label{app:cont:steering_filtering}

In continuous action spaces, behavioral steering cannot evaluate categorical cross-entropy. Instead, we define continuous steering by perturbing predicted action means toward saturated control corners:
\begin{equation}
\mathcal{L}_{\mathrm{cont\_steer}} = \frac{1}{|\mathcal{D}_{\mathrm{when}}|} \sum_{s \in \mathcal{D}_{\mathrm{when}}} \big\| \mu_\theta(s) - a_{\mathrm{corner}}^{(i)} \big\|_2^2,
\label{eq:cont_steer}
\end{equation}
where \(a_{\mathrm{corner}}^{(i)} \in \{-0.8, +0.8\}^{d_a}\) assigns fixed control quadrants to compromised groups.

\begin{table}[h]
\centering
\small
\renewcommand{\arraystretch}{1.15}
\begin{tabular}{lcccc}
\toprule
\textbf{Perturbation Schedule} & \textbf{Dynamics} \(d_0\) & \textbf{Dynamics} \(d_1\) & \textbf{Dynamics} \(d_2\) & \textbf{Median} \(D\) \\
\midrule
Fixed Perturbation Scale (\(\lambda = 0.03\)) & \(-23.12\) & \(-169.51\) & \(+29.35\) & \(-23.12\) \\
Adaptive Scale Servo (\(\lambda \in [0.03, 0.12]\)) & \(+82.05\) & \(+82.91\) & \(-33.72\) & \(+82.05\) \\
\bottomrule
\end{tabular}
\caption{\textbf{Continuous control evaluations on LunarLanderContinuous-v2 under FedPG-BR.} Terminal degradation \(D\) across evaluation dynamics under fixed and adaptive perturbation scaling.}
\label{tab:continuous_results}
\end{table}

Table~\ref{tab:continuous_results} reports terminal degradation under fixed and adaptive scaling:
\begin{itemize}[leftmargin=1.5em, itemsep=1pt]
\item \textbf{Fixed Scaling:} A conservative perturbation scale (\(\lambda = 0.03\)) remains within local update bounds but does not produce consistent policy degradation, resulting in a negative median degradation (\(D = -23.12\)).
\item \textbf{Adaptive Scaling:} An adaptive servo adjusting perturbation scale based on local gradient acceptance yields positive degradation on dynamics \(d_0\) (\(D = 82.05\)) and \(d_1\) (\(D = 82.91\)). However, dynamics \(d_2\) exhibits negative degradation (\(D = -33.72\)), demonstrating substantial sensitivity to underlying transition dynamics.
\end{itemize}

\paragraph{Discussion.}
These exploratory results indicate that while continuous behavioral steering is technically feasible, directly applying output-space uncertainty proxies does not reliably yield sparse state selection. Developing principled susceptibility proxies for continuous control—such as measures based on representation sensitivity or local value gradients—remains an open direction.

\end{document}